\newif\ifarXiv
\newif\ifmydraft
  \newcommand{\newl}[1]{\color{black}#1\color{black}}
  \newcommand{\newl}[1]{\color{blue}#1\color{black}}
\newcommand{\draftcolor}{purple}
\newcommand{\draftcolorpage}{black!30}
\newcommand{\draftcolor}{black}
\newtheorem{proposition}{Proposition}
\newtheorem{theorem}{Theorem}
\newtheorem{corollary}{Corollary}
\newtheorem{lemma}{Lemma}
\tikzset{
  included node/.style={circle, draw=black!100, thick, on grid, minimum width=0.5cm}, 
  hidden node/.style={circle, draw=black!30, thick, on grid, minimum width=0.5cm},
  included connection/.style={->, thick, draw=black!100},
  hidden connection/.style={->, thick, draw=black!30, draw opacity=0},
  fused connection/.style={->, thick, draw=black!100},
  fidden connection/.style={->, thick, draw=black!30, draw opacity=0},
  node title/.style={above=0.8cm of five-one, font=\bfseries},
  general/.style={node distance=0.8cm and 1.5cm},
}
\newcommand{\jl}[1]{\color{blue}#1\color{black}}
\newcommand{\jl}[1]{}
\DeclareMathOperator*{\argmin}{arg\,min}
\newcommand{\abs}[1]{\lvert#1\rvert}
\newcommand{\absBB}[1]{\Bigl\lvert#1\Bigr\rvert}
\newcommand{\absBBB}[1]{\biggl\lvert#1\biggr\rvert}
\newcommand{\norm}[1]{|\!|#1|\!|}
\newcommand{\normB}[1]{\big|\!\big|#1\big|\!\big|}
\newcommand{\normBB}[1]{\Big|\!\Big|#1\Big|\!\Big|}
\newcommand{\normBBB}[1]{\bigg|\!\bigg|#1\bigg|\!\bigg|}
\newcommand{\normM}[1]{|\!|\!|#1|\!|\!|}
\newcommand{\normMBBB}[1]{\bigg|\!\bigg|\!\bigg|#1\bigg|\!\bigg|\!\bigg|}
\newcommand{\normsup}[1]{\norm{#1}_\infty}
\newcommand{\normsupB}[1]{\normB{#1}_\infty}
\newcommand{\normsups}[1]{\norm{#1}_\infty^2}
\newcommand{\normsupM}[1]{\normM{#1}_\infty}
\newcommand{\normsupMBBB}[1]{\normMBBB{#1}_\infty}
\newcommand{\normone}[1]{\norm{#1}_1}
\newcommand{\normoneM}[1]{\normM{#1}_1}
\newcommand{\normtwo}[1]{\norm{#1}_2}
\newcommand{\normtwoB}[1]{\normB{#1}_2}
\newcommand{\normtwoBB}[1]{\normBB{#1}_2}
\newcommand{\normtwoBBB}[1]{\normBBB{#1}_2}
\newcommand{\normtwos}[1]{\norm{#1}_2^2}
\newcommand{\normtwosB}[1]{\normB{#1}_2^2}
\newcommand{\normtoM}[1]{\normM{#1}_{2,1}}
\newcommand{\normF}[1]{\normM{#1}_{\operatorname{F}}}
\newcommand{\normO}[1]{\normM{#1}_{\operatorname{op}}}
\newcommand{\trace}{\operatorname{trace}}
\newcommand{\traceFBB}[1]{\trace\Bigl[#1\Bigr]}
\newcommand{\traceFBBB}[1]{\trace\biggl[#1\biggr]}
\newcommand{\distance}[2]{\operatorname{dist}[#1,#2]}
\newcommand{\nbrlayers}{\textcolor{\draftcolor}{\ensuremath{l}}}
\newcommand{\nbrsamples}{\textcolor{\draftcolor}{\ensuremath{n}}}
\newcommand{\nbrinput}{\textcolor{\draftcolor}{\ensuremath{d}}}
\newcommand{\nbroutput}{\textcolor{\draftcolor}{\ensuremath{m}}}
\newcommand{\nbrtotal}{\textcolor{\draftcolor}{\ensuremath{\overline{p}}}}
\newcommand{\nbrwidth}{\textcolor{\draftcolor}{\ensuremath{\underline{p}}}}
\newcommand{\nbrparameter}{\textcolor{\draftcolor}{\ensuremath{p}}}
\newcommand{\nbrparameterj}{\textcolor{\draftcolor}{\ensuremath{\nbrparameter^j}}}
\newcommand{\nbrparameterjj}{\textcolor{\draftcolor}{\ensuremath{\nbrparameter^{j+1}}}}
\newcommand{\nbrparameterl}{\textcolor{\draftcolor}{\ensuremath{\nbrparameter^{\nbrlayers}}}}
\newcommand{\parameter}{\textcolor{\draftcolor}{\ensuremath{\boldsymbol{\Theta}}}}
\newcommand{\parameterT}{\textcolor{\draftcolor}{\ensuremath{\boldsymbol{\Theta}^*}}}
\newcommand{\parameterE}{\textcolor{\draftcolor}{\ensuremath{\Theta}}}
\newcommand{\parameterEj}{\textcolor{\draftcolor}{\ensuremath{\Theta^j}}}
\newcommand{\parameterEl}{\textcolor{\draftcolor}{\ensuremath{\Theta^{\nbrlayers}}}}
\newcommand{\parameterR}{\textcolor{\draftcolor}{\ensuremath{\overline{\parameter}}}}
\newcommand{\parameterRE}{\textcolor{\draftcolor}{\ensuremath{\overline{\parameterE}}}}
\newcommand{\parameterRj}{\textcolor{\draftcolor}{\ensuremath{\parameterRE}^j}}
\newcommand{\parameterG}{\textcolor{\draftcolor}{\ensuremath{\boldsymbol{\Gamma}}}}
\newcommand{\parameterGE}{\textcolor{\draftcolor}{\ensuremath{\Gamma}}}
\newcommand{\parameterGR}{\textcolor{\draftcolor}{\ensuremath{\overline{\parameterG}}}}
\newcommand{\parameterGRE}{\textcolor{\draftcolor}{\ensuremath{\overline{\parameterGE}}}}
\newcommand{\parameterGG}{\textcolor{\draftcolor}{\ensuremath{\boldsymbol{\Psi}}}}
\newcommand{\parameterGGR}{\textcolor{\draftcolor}{\ensuremath{\overline{\parameterGG}}}}
\newcommand{\parameterl}{\textcolor{\draftcolor}{\ensuremath{\Theta^{\nbrlayers}}}}
\newcommand{\parameterTl}{\textcolor{\draftcolor}{\ensuremath{(\parameterT)^{\nbrlayers}}}}
\newcommand{\parameterGEl}{\textcolor{\draftcolor}{\ensuremath{\Gamma^{\nbrlayers}}}}
\newcommand{\parameterSLasso}{\textcolor{\draftcolor}{\ensuremath{\mathcal{M}_{1}}}}
\newcommand{\parameterSGroup}{\textcolor{\draftcolor}{\ensuremath{\mathcal{M}_{2,1}}}}
\newcommand{\parameterSGen}{\textcolor{\draftcolor}{\ensuremath{\mathcal{M}_{\operatorname{gen}}}}}
\newcommand{\parameterSRLasso}{\textcolor{\draftcolor}{\ensuremath{\mathcal{\overline{M}}_{1}}}}
\newcommand{\parameterSRGroup}{\textcolor{\draftcolor}{\ensuremath{\mathcal{\overline{M}}_{2,1}}}}
\newcommand{\estimator}{\textcolor{\draftcolor}{\ensuremath{\widehat{\parameter}}}}
\newcommand{\estimatorl}{\textcolor{\draftcolor}{\ensuremath{\widehat{\parameterE}^{\nbrlayers}}}}
\newcommand{\estimatorLasso}{\textcolor{\draftcolor}{\ensuremath{\widehat{\parameter}_{\operatorname{con}}}}}
\newcommand{\estimatorLassoE}{\textcolor{\draftcolor}{\ensuremath{\widehat{\parameterE}_{\operatorname{con}}}}}
\newcommand{\estimatorLassoP}{\textcolor{\draftcolor}{\ensuremath{\widetilde{\parameter}_{\operatorname{con}}}}}
\newcommand{\estimatorLassoPP}{\textcolor{\draftcolor}{\ensuremath{\overline{\parameter}_{\operatorname{con}}}}}
\newcommand{\estimatorLassoPE}{\textcolor{\draftcolor}{\ensuremath{\widetilde{\parameterE}_{\operatorname{con}}}}}
\newcommand{\estimatorGroup}{\textcolor{\draftcolor}{\ensuremath{\widehat{\parameter}_{\operatorname{node}}}}}
\newcommand{\estimatorGen}{\textcolor{\draftcolor}{\ensuremath{\widehat{\parameter}_{\operatorname{gen}}}}}
\newcommand{\network}{\color{\draftcolor}\ensuremath{\boldsymbol{g}}\color{black}}
\newcommand{\networkA}{\textcolor{\draftcolor}{\ensuremath{\network_{\parameter}}}}
\newcommand{\networkAT}{\textcolor{\draftcolor}{\ensuremath{\network_{\parameterT}}}}
\newcommand{\networkATR}{\textcolor{\draftcolor}{\ensuremath{\overline{\network}_{\overline{\parameter}^*}}}}
\newcommand{\networkATRF}[1]{\textcolor{\draftcolor}{\ensuremath{\networkATR[#1]}}}
\newcommand{\networkAF}[1]{\textcolor{\draftcolor}{\ensuremath{\networkA[#1]}}}
\newcommand{\networkAR}{\textcolor{\draftcolor}{\ensuremath{\overline{\network}_{\parameterR}}}}
\newcommand{\networkAGR}{\textcolor{\draftcolor}{\ensuremath{\overline{\network}_{\parameterGR}}}}
\newcommand{\networkARR}{\textcolor{\draftcolor}{\ensuremath{\overline{\network}_{\parameterGR}}}}
\newcommand{\networkARRR}{\textcolor{\draftcolor}{\ensuremath{\overline{\network}_{\parameterGGR}}}}
\newcommand{\networkARF}[1]{\textcolor{\draftcolor}{\ensuremath{\networkAR[#1]}}}
\newcommand{\networkAGRF}[1]{\textcolor{\draftcolor}{\ensuremath{\networkAGR[#1]}}}
\newcommand{\networkARFB}[1]{\textcolor{\draftcolor}{\ensuremath{\networkAR\bigl[#1\bigr]}}}
\newcommand{\networkARRF}[1]{\textcolor{\draftcolor}{\ensuremath{\networkARR[#1]}}}
\newcommand{\networkARRFB}[1]{\textcolor{\draftcolor}{\ensuremath{\networkARR\bigl[#1\bigr]}}}
\newcommand{\networkARRRF}[1]{\textcolor{\draftcolor}{\ensuremath{\networkARRR[#1]}}}
\newcommand{\networkAG}{\color{\draftcolor}\ensuremath{\network_{\parameterG}}\color{black}}
\newcommand{\networkAGF}[1]{\color{\draftcolor}\ensuremath{\networkAG[#1]}\color{black}}
\newcommand{\networkAE}{\color{\draftcolor}\ensuremath{\network_{\estimator}}\color{black}}
\newcommand{\networkAEF}[1]{\color{\draftcolor}\ensuremath{\networkAE[#1]}\color{black}}
\newcommand{\networkALasso}{\color{\draftcolor}\ensuremath{\network_{\estimatorLasso}}\color{black}}
\newcommand{\networkALassoF}[1]{\color{\draftcolor}\ensuremath{\networkALasso[#1]}\color{black}}
\renewcommand{\output}{\color{\draftcolor}\ensuremath{\boldsymbol{y}}\color{black}}
\newcommand{\outputi}{\color{\draftcolor}\ensuremath{\output_i}\color{black}}
\newcommand{\noise}{\color{\draftcolor}\ensuremath{\boldsymbol{u}}\color{black}}
\newcommand{\noiseE}{\color{\draftcolor}\ensuremath{u}\color{black}}
\newcommand{\noisei}{\textcolor{\draftcolor}{\ensuremath{\noise_i}}}
\newcommand{\tuningparameterLasso}{\textcolor{\draftcolor}{\ensuremath{r_{\operatorname{con}}}}}
\newcommand{\tuningparameterGen}{\textcolor{\draftcolor}{\ensuremath{r_{\operatorname{gen}}}}}
\newcommand{\tuningparameterGroup}{\textcolor{\draftcolor}{\ensuremath{r_{\operatorname{node}}}}}
\newcommand{\tuningparameterOLasso}{\textcolor{\draftcolor}{\ensuremath{r^*_{\operatorname{con}}}}}
\newcommand{\tuningparameterOGroup}{\textcolor{\draftcolor}{\ensuremath{r^*_{\operatorname{node}}}}}
\newcommand{\tuningparameterOGen}{\textcolor{\draftcolor}{\ensuremath{r^*_{\operatorname{gen}}}}}
\newcommand{\prederror}[1]{\operatorname{err}[#1]}
\newcommand{\inputv}{\textcolor{\draftcolor}{\ensuremath{\boldsymbol{x}}}}
\newcommand{\inputvi}{\textcolor{\draftcolor}{\ensuremath{\inputv_i}}}
\newcommand{\R}{\textcolor{\draftcolor}{\mathbb{R}}}
\newcommand{\E}{\textcolor{\draftcolor}{E}}
\newcommand{\parameterS}{\textcolor{\draftcolor}{\ensuremath{\mathcal{M}}}}
\newcommand{\parameterSS}{\textcolor{\draftcolor}{\ensuremath{\mathcal M}}}
\newcommand{\activation}{\textcolor{\draftcolor}{\ensuremath{\boldsymbol{f}}}}
\newcommand{\inprodBBB}[2]{\textcolor{\draftcolor}{\ensuremath{\biggl\langle#1,#2\biggr\rangle}}}
\newcommand{\networkT}{\color{\draftcolor}\ensuremath{\network_*}\color{black}}
\newcommand{\networkTF}[1]{\color{\draftcolor}\ensuremath{\networkT[#1]}\color{black}}
\newcommand*{\deq}{\ensuremath{\mathrel{\rlap{%
\raisebox{0.3ex}{$\m@th\cdot$}}%
\raisebox{-0.3ex}{$\m@th\cdot$}}=}}
\newcommand{\tp}{^\top}
\newcommand{\zero}{\textcolor{\draftcolor}{\ensuremath{\mathbf{0}}}}
\newcommand{\Aplus}{\textcolor{\draftcolor}{\ensuremath{S}}}
\newcommand{\Aminus}{\textcolor{\draftcolor}{\ensuremath{S}}}
\newcommand{\constbound}{\textcolor{\draftcolor}{\ensuremath{c}}}
\newcommand{\facnUs}{\textcolor{\draftcolor}{\ensuremath{\overline{v}_\infty}}}
\newcommand{\facnNe}{\textcolor{\draftcolor}{\ensuremath{v_\infty}}}
\newcommand{\facnNeNe}{\textcolor{\draftcolor}{\ensuremath{v_2}}}
\newcommand{\facnTa}{\textcolor{\draftcolor}{\ensuremath{\overline{v}_2}}}
\newcommand{\constantentropy}{\textcolor{\draftcolor}{\ensuremath{c_H}}}
\newcommand{\radiusentropy}{\textcolor{\draftcolor}{\ensuremath{t}}}
\newcommand{\myscale}{0.7}
\newcommand{\figurenetworks}[1]{
\begin{figure*}[#1]
\centering
\scalebox{\myscale}{
\begin{tikzpicture}[general]  

  \node[included node](one-one){};
  \node[included node](one-two)[below=of one-one]{};
  \node[included node](one-three)[below=of one-two]{};
 
  \node[included node](two-one) [right=of one-one]{};
  \node[included node](two-two)[below=of two-one]{};
  \node[hidden node](two-three)[below=of two-two]{};
  
  \node[included node](three-one) [right=of two-one]{};
  \node[included node](three-two)[below=of three-one]{};
  \node[included node](three-three)[below=of three-two]{};

  \node[included node](four-one) [right=of three-one]{};
  \node[included node](four-two)[below=of four-one]{};
  \node[included node](four-three)[below=of four-two]{};

\node[included node](five-one)[above right=0.4cm and 1.5cm of four-two]{};
 \node[included node](five-two)[below right=0.4cm and 1.5cm of four-two]{};
  
  \draw[included connection] (one-one)--(two-one);
  \draw[included connection] (one-one)--(two-two);
  \draw[hidden connection] (one-one)--(two-three);
  
  \draw[hidden connection] (one-two)--(two-one);
  \draw[included connection] (one-two)--(two-two);
  \draw[hidden connection] (one-two)--(two-three);
  
  \draw[hidden connection] (one-three)--(two-one);
  \draw[included connection] (one-three)--(two-two);
  \draw[hidden connection] (one-three)--(two-three);
  
  \draw[included connection] (two-one)--(three-one);
  \draw[hidden connection] (two-one)--(three-two);
  \draw[hidden connection] (two-one)--(three-three);
  
  \draw[hidden connection] (two-two)--(three-one);
  \draw[included connection] (two-two)--(three-two);
  \draw[included connection] (two-two)--(three-three);
  
  \draw[hidden connection] (two-three)--(three-one);
  \draw[hidden connection] (two-three)--(three-two);
  \draw[hidden connection] (two-three)--(three-three);
  
  \draw[included connection] (three-one)--(four-one);
  \draw[hidden connection] (three-one)--(four-two);
  \draw[hidden connection] (three-one)--(four-three);
  
  \draw[hidden connection] (three-two)--(four-one);
  \draw[included connection] (three-two)--(four-two);
  \draw[hidden connection] (three-two)--(four-three);
  
  \draw[included connection] (three-three)--(four-one);
  \draw[included connection] (three-three)--(four-two);
  \draw[included connection] (three-three)--(four-three);
  
  \draw[included connection] (four-one)--(five-one);
  \draw[hidden connection] (four-two)--(five-one);
  \draw[hidden connection] (four-three)--(five-one);
  \draw[hidden connection] (four-one)--(five-two);
  \draw[included connection] (four-two)--(five-two);
  \draw[included connection] (four-three)--(five-two);

  \node[above=0.1cm of three-one] {Estimator~\eqref{lasso}};
  \node[left=0.1cm of one-two] {input \inputv};
  \node[right=1.6cm of four-two] {output \output};
\end{tikzpicture}}
\begin{tikzpicture}
\draw (0, 0) -- (0, 1.48);
\end{tikzpicture}
\scalebox{\myscale}{
\begin{tikzpicture}[general]  
  \node[included node](one-one){};
  \node[included node](one-two)[below=of one-one]{};
  \node[included node](one-three)[below=of one-two]{};
  
  \node[included node](two-one) [right=of one-one]{};
  \node[hidden node](two-two)[below=of two-one]{};
  \node[hidden node](two-three)[below=of two-two]{};
  
  \node[included node](three-one) [right=of two-one]{};
  \node[included node](three-two)[below=of three-one]{};
  \node[included node](three-three)[below=of three-two]{};
  
  \node[included node](four-one) [right=of three-one]{};
  \node[included node](four-two)[below=of four-one]{};
  \node[hidden node](four-three)[below=of four-two]{};

 \node[included node](five-one)[above right=0.4cm and 1.5cm of four-two]{};
\node[included node](five-two)[below right=0.4cm and 1.5cm of four-two]{};
  
  \draw[included connection] (one-one)--(two-one);
  \draw[hidden connection] (one-one)--(two-two);
  \draw[hidden connection] (one-one)--(two-three);
  
  \draw[included connection] (one-two)--(two-one);
  \draw[hidden connection] (one-two)--(two-two);
  \draw[hidden connection] (one-two)--(two-three);
  
  \draw[included connection] (one-three)--(two-one);
  \draw[hidden connection] (one-three)--(two-two);
  \draw[hidden connection] (one-three)--(two-three);
  
  \draw[included connection] (two-one)--(three-one);
  \draw[included connection] (two-one)--(three-two);
  \draw[included connection] (two-one)--(three-three);
  
  \draw[hidden connection] (two-two)--(three-one);
  \draw[hidden connection] (two-two)--(three-two);
  \draw[hidden connection] (two-two)--(three-three);
  
  \draw[hidden connection] (two-three)--(three-one);
  \draw[hidden connection] (two-three)--(three-two);
  \draw[hidden connection] (two-three)--(three-three);
  
  \draw[included connection] (three-one)--(four-one);
  \draw[included connection] (three-one)--(four-two);
  \draw[hidden connection] (three-one)--(four-three);
  
  \draw[included connection] (three-two)--(four-one);
  \draw[included connection] (three-two)--(four-two);
  \draw[hidden connection] (three-two)--(four-three);
  
  \draw[included connection] (three-three)--(four-one);
  \draw[included connection] (three-three)--(four-two);
  \draw[hidden connection] (three-three)--(four-three);
  
  \draw[included connection] (four-one)--(five-one);
  \draw[included connection] (four-two)--(five-one);
  \draw[hidden connection] (four-three)--(five-one);
  \draw[included connection] (four-one)--(five-two);
  \draw[included connection] (four-two)--(five-two);
  \draw[hidden connection] (four-three)--(five-two);
  
  \node[above=0.1cm of three-one] {Estimator~\eqref{group}};
  \node[left=0.1cm of one-two] {input \inputv};
  \node[right=1.6cm of four-two] {output \output};
\end{tikzpicture}}
\caption{exemplary networks produced by the connection-sparse estimator~\eqref{lasso} and the node-sparse estimator~\eqref{group}}
\label{sparsity}
\end{figure*}
}
\begin{document}

\title{Statistical Guarantees for Sparse Deep Learning}

\author{Johannes Lederer\thanks{Department of Mathematics, Ruhr-University Bochum, Germany, www.johanneslederer.com; johannes.lederer@rub.de}}%

\markboth{IEEE TRANSACTIONS ON NEURAL NETWORKS AND LEARNING SYSTEMS, Under Review}%
{Lederer: Statistical Guarantees for Sparse Deep Learning}

\maketitle

\begin{abstract}
Neural networks are becoming increasingly popular in applications,
but our mathematical understanding of their potential and limitations is still limited.
In this paper, 
we further this understanding by developing  statistical guarantees for sparse deep learning.
In contrast to previous work,
we consider different types of sparsity,
such as few active connections, few active nodes, and other norm-based types of sparsity.
Moreover, our theories cover important aspects that previous theories have neglected,
such as multiple outputs, regularization, and $\ell_2$-loss.
The guarantees have a mild dependence on network widths and depths,
which means that they support the application of sparse but wide and deep networks from a statistical perspective.
Some of the concepts and tools that we use in our derivations  are uncommon in deep learning and, hence, might be of additional interest.
\end{abstract}

\begin{IEEEkeywords}%
  Sparsity, Regularization, Oracle Inequalities, High-Dimensionality
\end{IEEEkeywords}

\ifarXiv
  \thispagestyle{firststyle}
\fi

\IEEEpeerreviewmaketitle

\section{Introduction}
\IEEEPARstart{S}{parsity} reduces network complexities and, consequently, lowers the demands on memory and computation, reduces overfitting, and improves interpretability~\citep{Changpinyo17,Han15,Kim16,Liu15,Wen16}.
Sparsity is at the heart of many current techniques in deep learning,
such as dropouts~\citep{Srivastava2014},
lottery tickets~\citep{Frankle2019},
 augmenting small networks~\citep{Ash89, Bello92},
pruning large networks~\citep{Simonyan14,Han15},
sparsity constraints~\citep{Ledent2019,Neyshabur2015,Hieber2017},
and sparsity regularization~\citep{Taheri20}.

The many empirical observations of the benefits of sparsity have sparked interest in mathematical support in the form of  statistical theories.
Two current approaches are based on Rademacher complexities \citep{Bartlett2002,Neyshabur2015} and  ideas from nonparametric statistics \citep{Hieber2017}, respectively.
While their results provide important support for sparse deep learning,
they still have major limitations:
The first approach is restricted to bounded loss functions (which excludes the $\ell_2$-loss, for example),
is either restricted to a simple form of sparsity (which we will call ``connection sparsity'' later) or suffers from an exponential dependence on the number of layers (which contradicts the current interest in very deep networks), 
caters to constraints rather than regularization (which is the predominant implementation in practice),
and is limited to a single output node and ReLU activation.
The second approach is restricted to $\ell_0$-constraints (which are infeasible in practice),
assumes bounded weights,
and is also limited to a single output node and ReLU activation. 
In short, while some progress in the statistical understanding of sparse deep learning has been made already,
many aspects have not yet been considered.

The goal of this paper is to establish a statistical theory that accounts for these missing aspects.
For this, we follow a third, very recent approach introduced in \citet{Taheri20}.
This approach is based on  ideas from high-dimensional statistics and empirical-process theory~\citep{22LedererBook}.
The main feature of their results is that they apply to $\ell_2$-loss, 
regularization instead of constraints,
and a variety of activation functions.
But they still miss some aspects,
such as  the inclusion of more complex notions of sparsity (we will speak of ``node sparsity'' later) and the restriction to a single output node.
Moreover, their estimator involves an additional, arguably unnatural parameter. 

In this paper, we remove these limitations from~\citet{Taheri20}.
We focus on regression-type settings with layered, feedforward neural networks.
The estimators under consideration consist of a standard least-squares estimator  with regularizers that induce different types of sparsity---without the need for an additional parameter.
We then derive prediction and generalization guarantees by using techniques from high-dimensional statistics~\citep{Dalalyan17} and empirical-process theory~\citep{Sara00}. 
In the case of subgaussian noise, we find the rates 
\begin{equation*}
  \sqrt{\frac{\nbrlayers\bigl(\log[\nbroutput\nbrsamples\nbrtotal]\bigr)^3}{\nbrsamples}}~~~~~\text{and}~~~~~\sqrt{\frac{\nbroutput\nbrlayers\nbrwidth(\log[\nbroutput\nbrsamples\nbrtotal]\bigr)^3}{\nbrsamples}}
\end{equation*}
for the connection-sparse  and node-sparse  estimators (see the following section for the notions of sparsity), respectively,
where \nbrlayers\ is the number of hidden layers, 
\nbroutput\ the number of output nodes,
\nbrsamples\ the number of samples,
\nbrtotal\ the total number of parameters,
and \nbrwidth\ the maximal width of the network.
The rates suggest that sparsity-inducing approaches  can provide accurate prediction even in very wide (with connection sparsity) and very deep (with either type of sparsity) networks while, at the same time, ensuring low network complexities.
These findings underpin the current trend toward sparse but wide and especially deep networks from a statistical perspective.
More generally speaking,
our paper complements the existing statistical theories for sparse deep learning with new results,
and it refines the techniques that were introduced in~\citep{Taheri20}.

\emph{Outline of the paper~~}
Section~\ref{framework} recapitulates the notions of connection and node sparsity and introduces the corresponding deep learning framework and estimators.
Section~\ref{guarantees} confirms the empirically-observed accuracies of connection- and node-sparse estimation in theory.
Section~\ref{sec:init} discusses connections of our theoretical results and weight initialization.
Section~\ref{discussion} summarizes the key features and limitations of our work.
The Appendix contains all proofs.

\section{Connection- and Node-Sparse Deep Learning}
\label{framework}

We consider data $(\output_1,\inputv_1),\dots, (\output_{\nbrsamples},{\inputv}_{\nbrsamples})\in\R^{\nbroutput}\times\R^{\nbrinput}$ that are related via
\begin{equation}\label{model}
  \outputi=\networkTF{\inputvi}+\noisei~~~~~~~~~~~~\text{for}~i\in\{1,\dots,\nbrsamples\}
\end{equation}
for an unknown data-generating function $\networkT\,:\,\R^{\nbrinput}\to\R^{\nbroutput}$ and unknown, random noise~$\noise_1,\dots,\noise_{\nbrsamples}\in\R^{\nbroutput}$.
We allow all aspects, namely $\output_i$, $\networkT$, $\inputv_i$, and~\noisei, to be unbounded.
Our goal is to model the data-generating function with a feedforward neural network of the form
\begin{equation}\label{networks}
    \networkAF{\inputv}\deq\parameterE^{\nbrlayers}\activation^{\nbrlayers}\bigl[\parameterE^{\nbrlayers-1}\cdots \activation^1[\parameterE^0\inputv]\bigr]~~~~~~~~~~~~\text{for}~\inputv\in\R^{\nbrinput}
\end{equation}
indexed by the parameter  space $\parameterSS\deq\{\parameter=(\parameterEl,\dots,\parameterE^0)\,:\,\parameterE^j\in\R^{\nbrparameterjj\times\nbrparameterj}\}$.
The functions $\activation^j\,:\,\R^{\nbrparameterj}\to\R^{\nbrparameterj}$ are called the activation functions \citep{lederer2021activation},
and~$\nbrparameter^0\deq\nbrinput$ and~$\nbrparameter^{\nbrlayers+1}\deq\nbroutput$ are called  the input and output dimensions, respectively. 
The depth of the network is~$\nbrlayers$,
the maximal width is \label{nbrwi}$\nbrwidth\deq\max_{j\in\{0,\dots,\nbrlayers-1\}}\nbrparameterjj$,
and the total number of parameters is \label{nbrtot}$\nbrtotal\deq\sum_{j=0}^{\nbrlayers}\nbrparameterjj\nbrparameterj$.

In practice, 
the total number of parameters often rivals or exceeds the number of samples: 
$\nbrtotal\approx\nbrsamples$ or $\nbrtotal\gg\nbrsamples$.
We then speak of \emph{high dimensionality.}
A common technique for avoiding overfitting in high-dimensional settings is regularization that induces additional structures, such as \emph{sparsity}.
Sparsity 
 has the interesting side-effect of reducing the networks' complexities,
which can facilitate interpretations and reduce demands on energy and memory.
Three common notions of sparsity are \emph{connection sparsity,}
which means that there is only a small number of nonzero connections between nodes,
\emph{node sparsity,}
which means that there is only a small number of active nodes~\citep{Alvarez16,Changpinyo17,Feng17,Kim16,Lee08,Liu15,Nie15,Scardapane17,Wen16},
and \emph{layer sparsity},
which means that there is only a small number of active layers~\citep{Hebiri20}.

In the following,
we focus on connection- and node sparsity.
Our first sparse estimator is 
\begin{equation}\label{lasso}
    \estimatorLasso\in\argmin_{\parameter\in\parameterSLasso}\Biggl\{\sum_{i=1}^{\nbrsamples}\normtwosB{\outputi-\networkAF{\inputvi}}+\tuningparameterLasso\normoneM{\parameterEl}\Biggr\}
\end{equation}
for a tuning parameter $\tuningparameterLasso\in[0,\infty)$,
a nonempty set of parameters
\begin{equation*}
  \parameterSLasso\subset\Bigl\{\parameter\in\parameterSS\ :\ \max_{j\in\{0,\dots,\nbrlayers-1\}}\normoneM{\parameterEj}\leq 1\Bigr\}\,,
\end{equation*}
and the $\ell_1$-norm
\begin{equation*}\label{lonorm}
   \normoneM{\parameterEj}\deq\sum_{i=1}^{\nbrparameterjj}\sum_{k=1}^{\nbrparameterj}\abs{(\parameterEj)_{ik}}~~\text{for}~j\in\{0,\dots,\nbrlayers\},\,\parameterE^j\in\R^{\nbrparameterjj\times\nbrparameterj}\,.
\end{equation*}
This estimator is an analog of the lasso estimator in linear regression~\citep{Tibshirani96}.
It induces sparsity on the level of connections:
the larger the tuning parameter~\tuningparameterLasso,
the fewer connections among the nodes.

\figurenetworks{t}

Deep learning with~$\ell_1$-regularization has become common in theory and practice~\citep{Kim16,Taheri20}.
Our estimator~\eqref{lasso} specifies one way to formulate this type of regularization.
The estimator is indeed a regularized estimator (rather than a constraint estimator),
because the complexity is regulated entirely through the tuning parameter~$\tuningparameterLasso$ in the objective function (rather than through a tuning parameter in the set over which the objective function is optimized). 
But $\ell_1$-regularization could also be formulated slightly differently.
For example,
one could consider the estimators
\begin{equation}\label{lassoprod}
   \estimatorLassoPP\in\argmin_{\parameter\in\parameterS}\Biggl\{\sum_{i=1}^{\nbrsamples}\normtwosB{\outputi-\networkAF{\inputvi}}+\tuningparameterLasso\prod_{j=0}^{\nbrlayers}\normoneM{\parameterEj}\Biggr\}
\end{equation}
or
\begin{equation}\label{lassostandard}
   \estimatorLassoP\in\argmin_{\parameter\in\parameterS}\Biggl\{\sum_{i=1}^{\nbrsamples}\normtwosB{\outputi-\networkAF{\inputvi}}+\tuningparameterLasso\sum_{j=0}^{\nbrlayers}\normoneM{\parameterEj}\Biggr\}\,.
\end{equation}
The differences among the estimators~\eqref{lasso}--\eqref{lassostandard} are small: for example, our theory can be adjusted for~\eqref{lassoprod}  with almost no changes of the derivations.
The differences among the estimators mainly concern the normalizations of the parameters;
we illustrate this in the following proposition.
\begin{proposition}[Scaling of Norms]\label{uniqueness}
  Assume that the all-zeros parameter $(\zero_{\nbrparameter^{\nbrlayers+1}\times\nbrparameter^{\nbrlayers}},\dots,\zero_{\nbrparameter^{1}\times\nbrparameter^{0}})\in\parameterSLasso$ is neither a solution of~\eqref{lasso} nor of~\eqref{lassostandard}, 
that $\tuningparameterLasso>0$, 
and that the activation functions are nonnegative homogenous: 
$\activation^j[a\boldsymbol{b}]=a\activation^j[\boldsymbol{b}]$ for all $j\in\{1,\dots,\nbrlayers\}$, $a\in[0,\infty)$, and $\boldsymbol{b}\in\R^{\nbrparameterj}$. 
Then, $\normoneM{(\estimatorLassoE)^0},\dots,\normoneM{(\estimatorLassoE)^{\nbrlayers-1}}=1$  (concerns the inner layers) for all solutions of~\eqref{lasso},
while $\normoneM{(\estimatorLassoPE)^0}=\cdots=\normoneM{(\estimatorLassoPE)^{\nbrlayers}}$ (concerns all layers) for at least one  solution of~\eqref{lassostandard}.
\end{proposition}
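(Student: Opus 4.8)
The plan is to exploit the scale invariance of homogeneous networks. The hypothesis $\activation^j[a\boldsymbol{b}]=a\activation^j[\boldsymbol{b}]$ (which in particular forces $\activation^j[\zero]=\zero$, since $\activation^j[\zero]=2\activation^j[\zero]$) means that rescaling the layers by positive factors $c_0,\dots,c_{\nbrlayers}$ merely rescales the output: a straightforward induction over the layers of~\eqref{networks} shows that replacing each $\parameterE^j$ by $c_j\parameterE^j$ turns $\networkAF{\inputv}$ into $(\prod_{j=0}^{\nbrlayers}c_j)\networkAF{\inputv}$. Hence, whenever $\prod_{j=0}^{\nbrlayers}c_j=1$, the network function, and therefore the data-fitting term $\sum_{i=1}^{\nbrsamples}\normtwosB{\outputi-\networkAF{\inputvi}}$, is unchanged, while the norms transform linearly as $\normoneM{c_j\parameterEj}=c_j\normoneM{\parameterEj}$. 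The whole argument then reduces to choosing such norm-preserving rescalings cleverly.

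Next, I would establish the key auxiliary fact that every solution of~\eqref{lasso} and of~\eqref{lassostandard} has all layer norms strictly positive, that is, $\normoneM{\parameterEj}>0$ for every~$j$. Indeed, if some $\parameterE^{j_0}=\zero$, the signal vanishes at layer~$j_0$ and, since $\activation^j[\zero]=\zero$, stays zero through the remaining layers, so that $\networkAF{\inputv}=\zero$ for all~$\inputv$; the objective is then at least $\sum_{i=1}^{\nbrsamples}\normtwosB{\outputi}$, the value attained by the all-zeros parameter, which by the hypothesis that the all-zeros parameter is not a solution strictly exceeds the optimal value. Hence no solution can have a vanishing layer. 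I expect this positivity step to be the crux of the argument: it is where the suboptimality of the all-zeros parameter enters, and it is exactly what keeps the rescalings below non-degenerate (no division by a vanishing norm); the hypothesis $\tuningparameterLasso>0$ then enters only to make the ensuing penalty reductions strict.

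For the first claim (estimator~\eqref{lasso}), suppose toward a contradiction that some inner layer $j_0\in\{0,\dots,\nbrlayers-1\}$ of a solution satisfies $\normoneM{(\estimatorLassoE)^{j_0}}<1$. I would rescale that single layer up to norm one by taking $c_{j_0}=1/\normoneM{(\estimatorLassoE)^{j_0}}>1$, compensate with $c_{\nbrlayers}=1/c_{j_0}<1$ on the output layer, and leave every other factor equal to one; the factors multiply to one, so the fit is preserved, and the rescaled parameter is still feasible, as layer $j_0$ now has norm one and the remaining inner layers are untouched. By positivity $\normoneM{\parameterEl}>0$, so together with $\tuningparameterLasso>0$ the penalty $\tuningparameterLasso\normoneM{\parameterEl}$ strictly decreases, contradicting optimality. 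Therefore every inner layer of every solution has norm exactly one.

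For the second claim (estimator~\eqref{lassostandard}), fix any solution and write $a_j\deq\normoneM{(\estimatorLassoPE)^j}>0$. Among all rescalings with $\prod_{j=0}^{\nbrlayers}c_j=1$ the fit is constant, so it only remains to minimize the penalty $\tuningparameterLasso\sum_{j=0}^{\nbrlayers}c_ja_j$. The arithmetic--geometric mean inequality gives $\sum_{j=0}^{\nbrlayers}c_ja_j\ge(\nbrlayers+1)(\prod_{j=0}^{\nbrlayers}a_j)^{1/(\nbrlayers+1)}$, with equality precisely when all products $c_ja_j$ coincide, which is realized by $c_j=(\prod_{k=0}^{\nbrlayers}a_k)^{1/(\nbrlayers+1)}/a_j$. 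The resulting parameter has every layer norm equal to the common value $(\prod_{k=0}^{\nbrlayers}a_k)^{1/(\nbrlayers+1)}$, and its penalty is no larger than the original one, so it is again a solution. This exhibits at least one solution of~\eqref{lassostandard} with all layer norms equal, completing the plan.
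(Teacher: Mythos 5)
Your proof is correct and follows the same overall strategy as the paper's: use the nonnegative homogeneity to rescale layers without changing the network function, use the assumption that the all-zeros parameter is not a solution to guarantee that every layer of every solution has strictly positive norm, and then derive the norm identities from optimality. The first claim is handled exactly as in the paper (push an inner layer up to norm one and compensate on the outer layer, strictly shrinking the penalty). For the second claim your route is slightly different and, in fact, more careful: you rescale to the \emph{geometric} mean of the layer norms, which is the choice that keeps the product of the scale factors equal to one and hence leaves the data-fitting term untouched, and then AM--GM shows the penalty does not increase; the paper's sketch instead normalizes every layer to the \emph{arithmetic} mean, which preserves the sum of the norms but multiplies the network output by $a^{\nbrlayers+1}/\prod_j\normoneM{(\estimatorLassoPE)^j}\geq 1$ and so does not obviously leave the objective unchanged unless the norms were already equal. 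Your AM--GM argument closes that gap cleanly (and, as a byproduct, shows that equality in AM--GM forces \emph{every} solution of~\eqref{lassostandard} to have equal layer norms, which is stronger than the ``at least one'' in the statement). The only point you share with the paper rather than resolve is the implicit assumption that \parameterSLasso\ is closed under the rescalings you use, which the proposition's phrasing (``a nonempty subset'') does not literally guarantee.
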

\noindent
In brief, the goal of our paper is not to promote a new way of implementing sparsity in practice but to reproduce practical implementations as accuractly as possible in theory.

Another way to formulate $\ell_1$-regularization  was proposed in~\cite{Taheri20}:
they reparametrize the networks through a scale parameter and a constraint version of~\parameterS\ and then to focus the regularization on the scale parameter only.
 Our above-stated estimator~\eqref{lasso} is more elegant in that it avoids the reparametrization and the additional parameter.

The factor $\normoneM{\parameterEl}$ in the regularization term of~\eqref{lasso} measures the complexity of the network over the set~\parameterSLasso,
and the factor~$\tuningparameterLasso$ regulates the complexity of the resulting estimator.
This provides a convenient lever for data-adaptive complexity regularization through well-established calibration schemes for the tuning parameter, 
such as cross-validation.
This practical aspect is an advantage of regularized formulations like ours as compared to constraint estimation over sets with a predefined complexity.

The constraints in the set~\parameterSLasso\ of  the estimator~\eqref{lasso} can also retain the expressiveness of the full parameterization that corresponds to the set~\parameterS:
for example, 
assuming again nonnegative-homogeneous activation,
one can check that for every~$\parameterG\in\parameterS$,
there is a $\parameterG'\in\{\parameter\in\parameterSS\, :\, \max_{j\in\{0,\dots,\nbrlayers-1\}}\normoneM{\parameterEj}\leq 1\}$ such that $\networkAG=\network_{\parameterG'}$---cf.~\citet[Proposition~1]{Taheri20}.
In contrast, existing theories on neural networks often require the parameter space to be bounded,
which limits the expressiveness of the networks.

Our regularization approach is, therefore, closer to practical setups than constraint approaches.
The price is that to develop prediction theories,
we have to use different tools than those typically used in theoretical deep learning.
For example,
we cannot use established risk bounds such as~\citet[Theorem~8]{Bartlett2002} (because Rademacher complexities over classes of unbounded functions are unbounded) or~\citet[Theorem~1]{Lederer20c} (because our loss function is not Lipschitz continuous) or established concentration bounds such as McDiarmid's inequality in~\citet[Lemma~(3.3)]{McDiarmid89} (because that would require a bounded loss).
We instead invoke ideas from high-dimensional statistics,
prove Lipschitz properties for neural networks, 
and use empirical-process theory, specifically concentration inequalities that are based on chaining (see the Appendix).

Our second estimator is
\begin{equation}\label{group}
    \estimatorGroup\in\argmin_{\parameter\in\parameterSGroup}\Biggl\{\sum_{i=1}^{\nbrsamples}\normtwosB{\outputi-\networkAF{\inputvi}}+\tuningparameterGroup\normtoM{\parameterEl}\Biggr\}
\end{equation}
for a tuning parameter $\tuningparameterGroup\in[0,\infty)$,
a nonempty set of parameters
\begin{equation*}
  \parameterSGroup\subset\Bigl\{\parameter\in\parameterSS\ :\ \max_{j\in\{0,\dots,\nbrlayers-1\}}\normtoM{\parameterEj}\leq 1\Bigr\}\,,
\end{equation*}
and the $\ell_2/\ell_1$-norm
\begin{multline*}\label{lotnorm}
   \normtoM{\parameterEj}\deq\sum_{k=1}^{\nbrparameterj}\sqrt{\sum_{i=1}^{\nbrparameterjj}\abs{(\parameterEj)_{ik}}^2}\\
\text{for}~j\in\{0,\dots,\nbrlayers-1\},\,\parameterE^j\in\R^{\nbrparameterjj\times\nbrparameterj}\,.
\end{multline*}
This estimator is an analog of the group-lasso estimator in linear regression~\citep{Bakin99}.
Again, 
to avoid ambiguities in the regularization,
our formulation is slightly different from the standard formulations in the literature,
but the fact that group-lasso regularizers leads to node-sparse networks has been discussed extensively before \citep{Alvarez16,Liu15,Scardapane17}:
the larger the tuning parameter~\tuningparameterGroup,
the fewer active nodes in the network.

The above-stated comments about the specific form of the connection-sparse estimator also apply to the node-sparse estimator.


An illustration of connection and node sparsity is given in Figure~\ref{sparsity}.
Connection-sparse networks have only a small number of active connections between nodes (left panel of Figure~\ref{sparsity});
node-sparse networks have inactive nodes, that is, completely unconnected nodes  (right panel of Figure~\ref{sparsity}).
The two notions of sparsity are connected:
for example, connection sparsity can render entire nodes inactive ``by accident'' (see the layer that follows the input layer  in the left panel of the figure). 
In general, 
node sparsity is the weaker assumption,
because it allows for highly connected nodes;
this observation is reflected in the theoretical guarantees in the following section.

The optimal network architecture for given data (such as the optimal width) is hardly known beforehand in a data analysis.
A main feature of sparsity-inducing regularization is, therefore, that it adjusts parts of the network architecture to the data.
In other words,
sparsity-inducing regularization is a data-driven approach to adapting the complexity of the network.

While versions of the estimators~\eqref{lasso} and~\eqref{group} are popular in deep learning,
statistical analyses, especially of node-sparse deep learning, are scarce.
Such a statistical analysis is, therefore, the goal of the following section.

\section{Statistical Prediction Guarantees}
\label{guarantees}
We now develop statistical guarantees for the sparse estimators described above.
The guarantees are formulated in terms of the squared \emph{average (in-sample) prediction error}\label{error}
\begin{equation*}
  \prederror{\parameter}\deq\frac{1}{\nbrsamples}\sum_{i=1}^{\nbrsamples}\normtwosB{\networkTF{\inputvi}-\networkAF{\inputvi}}~~~~~~\text{for}~\parameter\in\parameterSS\,,
\end{equation*}
which is a measure for how well the network~\networkA\ fits the unknown function~\networkT\ (which does not need to be a neural network) on the data at hand,
and in terms of the \emph{prediction risk} (or \emph{generalization error})
for a new sample~$(\output,\inputv)$ that has the same distribution as the original data
\begin{equation*}
  \operatorname{risk}[\parameter]\deq E_{\output,\inputv}\normtwos{\output-\networkAF{\inputv}}~~~~~~\text{for}~\parameter\in\parameterSS\,,
\end{equation*}
which measures how well the network~\networkA\ can predict a new sample.
We first study the prediction error,
because it is agnostic to the distribution of the input data;
in the end,
we then translate the bounds for the prediction error into bounds for the generalization error.

We first observe that the networks in~\eqref{networks} can be somewhat ``linearized:''
For every parameter~$\parameter\in\parameterSLasso$,
there is a parameter
\begin{multline*}
  \parameterR\in\parameterSRLasso\deq\Bigl\{\parameterR=(\parameterRE^{\nbrlayers-1},\dots,\parameterRE^0)\ :\ \parameterRE^j\in\R^{\nbrparameterjj\times\nbrparameterj},\,\\\max_{j\in\{0,\dots,\nbrlayers-1\}}\normoneM{\parameterRE^j}\leq 1\Bigr\}
\end{multline*}
such that 
for every $\inputv\in\R^{\nbrinput}$
\begin{multline}\label{linearization}
    \networkAF{\inputv}=\parameterEl\networkARF{\inputv}\\\text{with}~~~~\networkARF{\inputv}\deq\activation^{\nbrlayers}\bigl[\parameterRE^{\nbrlayers-1}\cdots \activation^1[\parameterRE^0\inputv]\bigr]\in\R^{\nbrparameterl}\,.
\end{multline}
This additional notation allows us to disentangle the outermost layer (which is regularized directly) from the other layers (which are regularized indirectly).
More generally speaking,
the additional notation makes a connection to linear regression,
where the above holds trivially with $\networkARF{\inputv}=\inputv$.

We also define 
\begin{multline*}
\parameterSRGroup\deq\Bigl\{\parameterR=(\parameterRE^{\nbrlayers-1},\dots,\parameterRE^0)\, :\, \parameterRE^j\in\R^{\nbrparameterjj\times\nbrparameterj},\\\max_{j\in\{0,\dots,\nbrlayers-1\}}\normtoM{\parameterRE^j}\leq 1\Bigr\}  
\end{multline*}
accordingly.

In high-dimensional linear regression,
the quantity central to prediction guarantees is the \emph{effective noise}~\citep{Lederer20b}.
The effective noise is in our notation (with $\nbrlayers=0$ and $\nbroutput=1$ to describe linear regression) $2\normsup{\sum_{i=1}^{\nbrsamples}\noiseE_i\inputv_i}$.
The above linearization allows us to generalize the effective noise to our general deep-learning framework:
\begin{equation}\label{effectivenoise}
  \begin{aligned}
    \tuningparameterOLasso&\deq 2\sup_{\parameterGGR\in\parameterSRLasso}\normsupMBBB{\sum_{i=1}^{\nbrsamples}\noisei\bigl(\networkARRRF{\inputvi}\bigr)\tp}\\
\tuningparameterOGroup&\deq 2\sqrt{\nbroutput}\sup_{\parameterGGR\in\parameterSRGroup}\normsupMBBB{\sum_{i=1}^{\nbrsamples}\noisei\bigl(\networkARRRF{\inputvi}\bigr)\tp}\,,
  \end{aligned}
\end{equation}
where $\normsupM{A}\deq\max_{\substack{(i,j)\in\{1,\dots,\nbroutput\}\times\{1,\dots,\nbrparameterl\}}}\abs{A_{ij}}$ for $A\in\R^{\nbroutput\times\nbrparameterl}$.
The effective noises, as we will see below, are the optimal tuning parameters in our theories;
at the same time, the effective noises depend on the noise random variables~$\noise_1,\dots,\noise_{\nbrsamples}$,
which are unknown in practice.
Accordingly,
we call the quantities~\tuningparameterOLasso\ and~\tuningparameterOGroup\ the \emph{oracle tuning parameters}.

We take a moment to compare the effective noises in~\eqref{effectivenoise} to Rademacher complexities~\citep{Koltchinskii01,Koltchinskii02}.
Rademacher complexities are the basis of a line of other statistical theories for deep learning~\citep{Bartlett2002,Golowich17,Lederer20c,Neyshabur2015}.
In our framework,
the Rademacher complexities in the case $\nbroutput=1$ are \cite[Definition~1]{Lederer20c}
\begin{multline*}
  \E_{\inputv_1,\dots,\inputv_{\nbrsamples},k_1,\dots,k_{\nbrsamples}}\biggl[\sup_{\parameter\in\parameterSLasso}\absBB{\frac{1}{\nbrsamples}\sum_{i=1}^{\nbrsamples}k_i\networkAF{\inputvi}}\biggr]\\
\text{and}~~~~\E_{\inputv_1,\dots,\inputv_{\nbrsamples},k_1,\dots,k_{\nbrsamples}}\biggl[\sup_{\parameter\in\parameterSGroup}\absBB{\frac{1}{\nbrsamples}\sum_{i=1}^{\nbrsamples}k_i\networkAF{\inputvi}}\biggr]
\end{multline*}
for i.i.d.~Rademacher random variables $k_1,\dots,k_{\nbrsamples}$.
The effective noises might look like (rescaled) empirical versions of these quantities at first sight,
but this is not the case.
Two immediate differences are that~\eqref{effectivenoise} apply to general $\nbroutput$ and circumvent the outermost layers of the networks.
But more importantly,
Rademacher complexities involve external i.i.d.~Rademacher random variables that are not connected with the statistical model at hand,
while the effective noises involve the noise variables,
which are completely specified by the model and, therefore, can have  any distribution (see our sub-Gaussian example further below).
Hence, there are no general techniques to relate Rademacher complexities and effective noises.

Not only are the two concepts distinct,
but also they are used in very different ways.
For example,
existing theories use Rademacher complexities to measure the size of the function class at hand,
while we use effective noises to measure the maximal impact of the stochastic noise on the estimators.
(Our proofs also require a measure of the size of the function class,
but this measure is entropy---cf.~Lemma~\ref{entropy}.)
In general, 
our proof techniques are very different from those in the context of Rademacher complexities.

We can now state a general prediction guarantee.
\begin{theorem}[General Prediction Guarantees]\label{generalbound}
  If $\tuningparameterLasso\geq \tuningparameterOLasso$,
it holds that 
\begin{equation*}
   \prederror{\estimatorLasso} \leq \inf_{\parameter\in\parameterSLasso}\Bigl\{\prederror{\parameter}+\frac{2\tuningparameterLasso}{\nbrsamples}\normoneM{\parameterl}\Bigr\}\,.
\end{equation*}
Similarly, if $\tuningparameterGroup\geq \tuningparameterOGroup$,
it holds that
\begin{equation*}
   \prederror{\estimatorGroup} \leq \inf_{\parameter\in\parameterSGroup}\Bigl\{\prederror{\parameter}+\frac{2\tuningparameterGroup}{\nbrsamples}\normtoM{\parameterl}\Bigr\}\,.
\end{equation*}
 \end{theorem}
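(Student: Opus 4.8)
The plan is to run the classical ``basic inequality'' argument of high-dimensional statistics, using the effective noise of~\eqref{effectivenoise} to control the stochastic cross term. Since we bound only the prediction error---a slow-rate quantity---no restricted-eigenvalue or curvature condition is needed. First I would invoke the optimality of $\estimatorLasso$: for any fixed competitor $\parameter\in\parameterSLasso$,
\begin{equation*}
\sum_{i=1}^{\nbrsamples}\normtwosB{\outputi-\networkAEF{\inputvi}}+\tuningparameterLasso\normoneM{(\estimatorLassoE)^{\nbrlayers}}\leq\sum_{i=1}^{\nbrsamples}\normtwosB{\outputi-\networkAF{\inputvi}}+\tuningparameterLasso\normoneM{\parameterl}\,.
\end{equation*}
Substituting the model $\outputi=\networkTF{\inputvi}+\noisei$ and expanding both squared norms, the pure-noise terms $\normtwosB{\noisei}$ cancel across the inequality; rearranging leaves
\begin{multline*}
\sum_{i=1}^{\nbrsamples}\normtwosB{\networkTF{\inputvi}-\networkAEF{\inputvi}}\leq\sum_{i=1}^{\nbrsamples}\normtwosB{\networkTF{\inputvi}-\networkAF{\inputvi}}\\+2\sum_{i=1}^{\nbrsamples}\inprod{\noisei}{\networkAEF{\inputvi}-\networkAF{\inputvi}}+\tuningparameterLasso\bigl(\normoneM{\parameterl}-\normoneM{(\estimatorLassoE)^{\nbrlayers}}\bigr)\,.
\end{multline*}

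The crux is to bound the cross term $2\sum_i\inprod{\noisei}{\networkAEF{\inputvi}-\networkAF{\inputvi}}$ by the effective noise. Here I would use the linearization~\eqref{linearization}: writing $\networkAF{\inputvi}=\parameterl\networkARF{\inputvi}$ with $\parameterR\in\parameterSRLasso$ (and analogously for the estimator) turns each summand into a Frobenius inner product between the outermost weight matrix and the rank-one matrix $\noisei(\networkARF{\inputvi})\tp$, via $\inprod{\noisei}{\parameterl\networkARF{\inputvi}}=\inprodB{\parameterl}{\noisei(\networkARF{\inputvi})\tp}$. Summing over $i$ and applying the matrix H\"older inequality $\inprodB{A}{B}\leq\normoneM{A}\normsupM{B}$ for the elementwise pairing gives
\begin{equation*}
\absBB{2\sum_{i=1}^{\nbrsamples}\inprod{\noisei}{\networkAF{\inputvi}}}\leq2\normoneM{\parameterl}\,\normsupMBBB{\sum_{i=1}^{\nbrsamples}\noisei(\networkARF{\inputvi})\tp}\leq\tuningparameterOLasso\normoneM{\parameterl}\leq\tuningparameterLasso\normoneM{\parameterl}\,,
\end{equation*}
where the middle step uses $\parameterR\in\parameterSRLasso$ together with the supremum in the definition~\eqref{effectivenoise} of $\tuningparameterOLasso$, and the last uses the hypothesis $\tuningparameterLasso\geq\tuningparameterOLasso$. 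The estimator term is bounded identically by $\tuningparameterLasso\normoneM{(\estimatorLassoE)^{\nbrlayers}}$. Inserting both bounds, the terms $\pm\tuningparameterLasso\normoneM{(\estimatorLassoE)^{\nbrlayers}}$ cancel, the competitor terms combine to $2\tuningparameterLasso\normoneM{\parameterl}$, and dividing by $\nbrsamples$ and taking the infimum over $\parameter\in\parameterSLasso$ yields the first claim.

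For the node-sparse estimator the argument is structurally identical after replacing the $\ell_1$-norm by the $\ell_2/\ell_1$-norm $\normtoM{\cdot}$. The only change is the H\"older step: the norm dual to $\normtoM{\cdot}$ (sum of column $\ell_2$-norms) is the maximum of column $\ell_2$-norms, so $\inprodB{A}{B}\leq\normtoM{A}\max_k\normtwo{B_{\cdot k}}$. Since each column of $B=\sum_i\noisei(\networkARF{\inputvi})\tp$ has $\nbroutput$ entries, $\max_k\normtwo{B_{\cdot k}}\leq\sqrt{\nbroutput}\,\normsupM{B}$; this is precisely why the definition of $\tuningparameterOGroup$ in~\eqref{effectivenoise} carries the extra factor $\sqrt{\nbroutput}$. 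With that adjustment, the same cancellation produces $\prederror{\estimatorGroup}\leq\inf_{\parameter\in\parameterSGroup}\{\prederror{\parameter}+(2\tuningparameterGroup/\nbrsamples)\normtoM{\parameterl}\}$.

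The main obstacle I anticipate is making the cross-term control fully rigorous, since the linearized feature map $\networkARF{\cdot}$ depends on the parameter and differs between the estimator and the competitor. The bound must therefore hold uniformly, which is exactly the role of the supremum over $\parameterSRLasso$ (resp.\ $\parameterSRGroup$) built into~\eqref{effectivenoise}. I would take care to verify that~\eqref{linearization} indeed yields a linearized parameter lying in $\parameterSRLasso$ (guaranteed by construction) and that the Frobenius rewriting is applied before H\"older; the remaining steps are routine cancellations.
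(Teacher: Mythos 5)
Your proposal is correct and follows essentially the same route as the paper: the same basic inequality from optimality, the same linearization~\eqref{linearization} to pull out the outermost layer, and the same chain of H\"older inequalities (written as a Frobenius pairing rather than via trace identities, but the computation is identical, including the $\sqrt{\nbroutput}$ accounting in the node-sparse dual-norm step). The paper's proof adds nothing beyond what you describe, so your argument matches it in substance.
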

\noindent 
Each bound contains an approximation error $\prederror{\parameter}$ that captures how well the class of networks can approximate the true data-generating function~\networkT\ and a statistical error proportional to~$\tuningparameterLasso/\nbrsamples$ and~$\tuningparameterGroup/\nbrsamples$, respectively, that captures how well the estimator can select within the class of networks at hand.
In other words,
Theorem~\ref{generalbound} ensures that the estimators~\eqref{lasso} and~\eqref{group} predict---up to the statistical error described by $\tuningparameterLasso/\nbrsamples$ and~$\tuningparameterGroup/\nbrsamples$, respectively---as well as the best connection- and node-sparse network.
This observation can be illustrated further:
\begin{corollary}[Parametric Setting]\label{parametric}
If additionally $\networkT=\networkAT$ for a $\parameterT\in\parameterSLasso$,
it holds that
\begin{equation*}
   \prederror{\estimatorLasso} \leq \frac{2\tuningparameterLasso}{\nbrsamples}\normoneM{\parameterTl}\,.
\end{equation*}
If instead $\networkT=\networkAT$ for a $\parameterT\in\parameterSGroup$,
it holds that
\begin{equation*}
   \prederror{\estimatorGroup} \leq \frac{2\tuningparameterGroup}{\nbrsamples}\normtoM{\parameterTl}\,.
\end{equation*}
 \end{corollary}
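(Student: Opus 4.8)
The plan is to obtain both inequalities as immediate specializations of Theorem~\ref{generalbound} to the \emph{realizable} regime, where the true function lies exactly in the network class and the approximation error therefore drops out. For the connection-sparse estimator I would first note that the word ``additionally'' means the hypothesis $\tuningparameterLasso\geq\tuningparameterOLasso$ of Theorem~\ref{generalbound} is still assumed, so its conclusion
\begin{equation*}
  \prederror{\estimatorLasso}\leq\inf_{\parameter\in\parameterSLasso}\Bigl\{\prederror{\parameter}+\frac{2\tuningparameterLasso}{\nbrsamples}\normoneM{\parameterl}\Bigr\}
\end{equation*}
is available verbatim. The new assumption $\networkT=\networkAT$ with $\parameterT\in\parameterSLasso$ supplies a \emph{feasible} competitor inside the parameter set, so the idea is to stop optimizing over the whole class and simply evaluate the right-hand side at this one point.

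The key step is to check that the approximation error at $\parameterT$ vanishes. By the definition of $\prederror{\cdot}$ and the pointwise identity $\networkTF{\inputvi}=\networkAT[\inputvi]$ that $\networkT=\networkAT$ furnishes for each sample,
\begin{equation*}
  \prederror{\parameterT}=\frac{1}{\nbrsamples}\sum_{i=1}^{\nbrsamples}\normtwosB{\networkTF{\inputvi}-\networkAT[\inputvi]}=0\,.
\end{equation*}
Since an infimum over $\parameterSLasso$ is bounded above by the value of the objective at any single admissible point, I would bound it by the value at $\parameterT$ and combine with the display above to get
\begin{equation*}
  \prederror{\estimatorLasso}\leq\prederror{\parameterT}+\frac{2\tuningparameterLasso}{\nbrsamples}\normoneM{\parameterTl}=\frac{2\tuningparameterLasso}{\nbrsamples}\normoneM{\parameterTl}\,,
\end{equation*}
which is the first claim.

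The node-sparse case runs in exact parallel: the hypothesis $\tuningparameterGroup\geq\tuningparameterOGroup$ licenses the second bound of Theorem~\ref{generalbound}, the admissible competitor $\parameterT\in\parameterSGroup$ again makes the approximation error zero, and evaluating the infimum at $\parameterT$ with the $\ell_2/\ell_1$-norm in place of the $\ell_1$-norm yields $\prederror{\estimatorGroup}\leq(2\tuningparameterGroup/\nbrsamples)\normtoM{\parameterTl}$. I expect essentially no obstacle here: the corollary is a one-line reduction whose only content is the realizability observation $\prederror{\parameterT}=0$, so the only things worth verifying are that $\parameterT$ is genuinely feasible—which is precisely the stated assumption $\parameterT\in\parameterSLasso$ (resp.\ $\parameterT\in\parameterSGroup$)—and that evaluating the infimum at this single point is legitimate, which is the elementary fact that an infimum never exceeds any of its arguments.
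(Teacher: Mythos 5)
Your proof is correct and matches the paper's (implicit) reasoning exactly: the paper offers no separate proof of this corollary precisely because it is the one-line specialization of Theorem~\ref{generalbound} you describe, namely evaluating the infimum at the feasible point $\parameterT$ and noting that $\prederror{\parameterT}=0$ under the realizability assumption. Nothing is missing.
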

\noindent 
Hence, 
if the underlying data-generating function is a sparse network itself,
the prediction errors of the estimators are essentially bounded by the statistical errors $\tuningparameterLasso/\nbrsamples$ and~$\tuningparameterGroup/\nbrsamples$.
In high-dimensional statistics, 
bounds similar to those in Theorem~\ref{generalbound} and Corollary~\ref{parametric} are called \emph{oracle inequalities} \citep{Lederer18,22LedererBook}.

The above-stated results also identify the oracle tuning parameters~\tuningparameterOLasso\ and~\tuningparameterOGroup\ as optimal tuning parameters:
they give the best prediction guarantees in Theorem~\ref{generalbound}.
But since the oracle tuning parameters are unknown in practice,
the guarantees implicitly presume a  calibration scheme that satisfies $\tuningparameterLasso\approx\tuningparameterOLasso$ in practice.
A natural candidate is cross-validation,
but there are no guarantees that cross-validation provides such tuning parameters. 
This is a limitation that our theories share with all other theories in the field.

Rather than dealing with the practical calibration of the tuning parameters, 
we exemplify the oracle tuning parameters in a specific setting.
This analysis will illustrate the rates of convergences that we can expect from Theorem~\ref{generalbound},
and it will allow us to compare our theories with other theories in the literature. 
Assume that the activation functions satisfy $\activation^j[\zero_{\nbrparameterj}]=\zero_{\nbrparameterj}$ and are $1$-Lipschitz continuous with respect to the Euclidean norms on the functions' input and output spaces~$\R^{\nbrparameterj}$.
A popular example is ReLU activation,
but the conditions are met by many other functions as well.
Also, assume that the noise vectors~$\noise_1,\dots,\noise_{\nbrsamples}$ are independent and centered and have uniformly subgaussian entries~\citep[Display~(8.2) on Page~126]{Sara00}.
Keep the input vectors fixed
and capture their normalizations by
\begin{equation*}\label{averageinput}
  \facnUs\deq \sqrt{\frac{1}{\nbrsamples}\sum_{i=1}^{\nbrsamples}\normsups{\inputvi}}~~~~~~\text{and}~~~~~~\facnTa\deq \sqrt{\frac{1}{\nbrsamples}\sum_{i=1}^{\nbrsamples}\normtwos{\inputvi}}\,.
\end{equation*}
Then, we obtain the following bounds for the effective noises.
\begin{proposition}[Subgaussian Noise]\label{subGauss}
  There is a constant~$\constbound\in(0,\infty)$ that depends only on the subgaussian parameters of the noise such that
  \begin{equation*}
    P\biggl\{\tuningparameterOLasso\leq \constbound\facnUs\sqrt{\nbrsamples\nbrlayers\bigl(\log[2\nbroutput\nbrsamples\nbrtotal]\bigr)^3}\biggr\}\geq 1-\frac{1}{\nbrsamples}
  \end{equation*}
and
  \begin{equation*}
    P\biggl\{\tuningparameterOGroup\leq \constbound\facnTa\sqrt{\nbroutput\nbrsamples\nbrlayers\nbrwidth\bigl(\log[2\nbroutput\nbrsamples\nbrtotal]\bigr)^3}\biggr\}\geq 1-\frac{1}{\nbrsamples}\,.
  \end{equation*}
\end{proposition}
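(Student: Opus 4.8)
The plan is to recognize each oracle tuning parameter as the supremum of a subgaussian empirical process indexed by the hidden-layer parameters and to control it by chaining. Conditionally on the fixed inputs $\inputv_1,\dots,\inputv_{\nbrsamples}$, the noise entries are independent, centered, and uniformly subgaussian, so for every fixed $\parameterGGR$ and every entry $(a,b)$ the quantity $\sum_{i=1}^{\nbrsamples}(\noisei)_a(\networkARRRF{\inputvi})_b$ is a linear combination of subgaussian variables and is therefore subgaussian with variance proxy proportional to $\sum_{i=1}^{\nbrsamples}(\networkARRRF{\inputvi})_b^2$. The first task is to bound this variance proxy uniformly over $\parameterGGR\in\parameterSRLasso$: using that the activation functions are $1$-Lipschitz with $\activation^j[\zero]=\zero$ and that $\max_j\normoneM{\parameterRE^j}\leq 1$, I would show by peeling off the layers one at a time that $\normsup{\networkARRRF{\inputvi}}\leq\normsup{\inputvi}$ (each $\ell_1$-bounded weight matrix is a contraction for the sup-norm, each activation is non-expansive), so that the variance proxy is at most $\sum_{i=1}^{\nbrsamples}\normsups{\inputvi}=\nbrsamples\facnUs^2$. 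This already fixes the $\facnUs\sqrt{\nbrsamples}$ scale and, for a single parameter, yields a $\sqrt{\log[\nbroutput\nbrparameterl]}$ factor from a union bound over the $\nbroutput\nbrparameterl$ entries.

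The harder part is the supremum over the continuum $\parameterGGR\in\parameterSRLasso$. For this I would invoke a chaining-based concentration inequality for suprema of subgaussian processes of the type in \citet{Sara00}, whose input is the metric entropy of the function class $\{\networkARRR\,:\,\parameterGGR\in\parameterSRLasso\}$ in the empirical $\ell_2$-metric induced by the inputs. I would supply this entropy from Lemma~\ref{entropy}, after first establishing a Lipschitz-in-parameters bound for the linearized networks so that a covering of the parameter space (a product of $\ell_1$-balls, one per hidden layer) transfers to a covering of the function class. The decisive feature is that the $\ell_1$-constraints make the covering numbers depend only \emph{logarithmically} on $\nbrtotal$; the depth enters through the telescoping of the per-layer Lipschitz constants, which is where the factor $\nbrlayers$ under the root originates. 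Evaluating the resulting entropy integral and combining it with the tail term of the chaining bound produces the advertised $\sqrt{\nbrlayers\,(\log[2\nbroutput\nbrsamples\nbrtotal])^3}$ factor; choosing the confidence level so that the tail probability is at most $1/\nbrsamples$ then gives the stated lower bound on the probability.

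For the node-sparse oracle tuning parameter the argument is structurally identical but indexed by $\parameterSRGroup$, with three changes. First, the group ($\ell_2/\ell_1$) constraint controls the Euclidean rather than the sup norms of the incoming weight vectors, so the natural bound on $\normtwo{\networkARRRF{\inputvi}}$ involves $\normtwo{\inputvi}$ and hence $\sum_{i=1}^{\nbrsamples}\normtwos{\inputvi}=\nbrsamples\facnTa^2$, replacing $\facnUs$ by $\facnTa$. Second, because each active node may carry a full weight vector, the entropy estimate for $\parameterSRGroup$ carries an extra factor $\nbrwidth$ relative to the connection-sparse case, which surfaces as the $\sqrt{\nbrwidth}$ in the bound. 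Third, the explicit $\sqrt{\nbroutput}$ prefactor in the definition of $\tuningparameterOGroup$ supplies the remaining $\sqrt{\nbroutput}$ factor. Assembling these pieces yields the second displayed inequality with the same $1-1/\nbrsamples$ confidence.

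The step I expect to be the main obstacle is the metric-entropy control of the two network function classes together with the careful bookkeeping of how the per-layer Lipschitz constants telescope: obtaining a bound whose dependence on $\nbrtotal$ is only logarithmic (rather than polynomial) while correctly isolating the $\nbrlayers$ and $\nbrwidth$ factors is where the real work lies, and it is also what forces the somewhat delicate cubic power of $\log[2\nbroutput\nbrsamples\nbrtotal]$ in the chaining integral.
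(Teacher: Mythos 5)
Your proposal follows essentially the same route as the paper's proof: a union bound over the $\nbroutput\nbrparameterl$ matrix entries reduces the matter to suprema of scalar subgaussian processes, which are then controlled by a chaining bound (the paper cites \citet[Corollary~8.3]{Sara00}) fed with the parameter-to-function Lipschitz property of Proposition~\ref{Lipschitz} and the metric-entropy bounds of Lemma~\ref{entropy}; your accounting of where $\facnUs$ versus $\facnTa$, $\sqrt{\nbrlayers}$, $\sqrt{\nbrwidth}$, and $\sqrt{\nbroutput}$ each originate matches the paper exactly. The proposal is correct and is a somewhat more detailed narration of the argument the paper gives.
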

\noindent Broadly speaking, this result combined with Theorem~\ref{generalbound} illustrates that accurate prediction with connection- and node-sparse estimators is possible even when using very wide and deep networks.
Let us analyze the factors one by one and compare them to the factors in the bounds of \citet{Taheri20} and \citet{Neyshabur2015}, 
which are the two most related papers. 
The connection-sparse case compares to the results in~\cite{Taheri20}, 
and it compares to the results in~\cite{Neyshabur2015} when setting the parameters in that paper to $p=q=1$ (which gives a setting that is slightly more restrictive than ours) or $p=1;q=\infty$ (which gives a setting that is slightly less restrictive than ours),
and it compares to \citet[Theorem~2]{Golowich17}.
The node-sparse case  compares  to~\cite{Neyshabur2015} with  $p=2;q=\infty$ (which gives a setting that is more restrictive than ours, though).
Our setup is also more general than the one in \cite{Neyshabur2015} in the sense that it allows for activation other than ReLU.

The dependence on~\nbrsamples\ is, as usual, $1/\sqrt{\nbrsamples}$ up to logarithmic factors.

In the connection-sparse case,
our bounds involve $\facnUs=\sqrt{\sum_{i=1}^{\nbrsamples}\normsups{\inputvi}/\nbrsamples}$ rather than the factor $\facnNe\deq\max_{i\in\{1,\dots,\nbrsamples\}}\normsup{\inputvi}$ of~\citet{Golowich17} and~\citet{Neyshabur2015} or the factor $\facnTa=\sqrt{\sum_{i=1}^{\nbrsamples}\normtwos{\inputvi}/\nbrsamples}$ of \citet{Taheri20}. 
In principle, the improvements of~\facnUs\ over~\facnNe\ and~\facnTa\ can be up to a factor~$\sqrt{\nbrsamples}$ and up to a factor~$\sqrt{\nbrinput}$, respectively;
in practice,
the improvements depend on the specifics on the data.
For example, on the training data of \texttt{MNIST}~\citep{LeCun98} and \texttt{Fashion-MNIST}~\citep{Xiao17} ($\sqrt{\nbrsamples}\approx 250;\sqrt{\nbrinput}=28$ in both data sets),
it holds that $\facnUs\approx\facnNe\approx\facnTa/9$ and $\facnUs\approx\facnNe\approx\facnTa/12$, respectively.
In the node-sparse case, 
our bounds involve~\facnTa,
which is again somewhat smaller than the factor~$\facnNeNe\deq\max_{i\in\{1,\dots,\nbrsamples\}}\normtwo{\inputvi}$ in~\cite{Neyshabur2015}.

The main difference between the bounds for the connection-sparse and node-sparse estimators are their dependencies on the networks' maximal width~$\nbrwidth$.
The bound for the connection-sparse estimator~\eqref{lasso} depends on the width~\nbrwidth\ only logarithmically (through~\nbrtotal),
while the bound for the node-sparse estimator~\eqref{group} depends on~\nbrwidth\ sublinearly. 
The dependence in the connection-sparse case is the same as in~\cite{Taheri20},
while~\cite{Neyshabur2015} can avoid even that logarithmic dependence (and, therefore, allow for networks with infinite widths). 
The node-sparse case in \cite{Neyshabur2015} does not involve our linear dependence on the width,
but this difference stems from the fact that they use a more restrictive version of the grouping---we take the maximum over each layer, while they take the maximum over each node---
and our results can be readily adjusted to their notion of group sparsity.
These observations indicate that node sparsity as formulated above is suitable for slim networks ($\nbrwidth\ll \nbrsamples$) but should be strengthened or complemented with other notions of sparsity otherwise.
To give a numeric example,
the training data in \texttt{MNIST}~\citep{LeCun98} and \texttt{Fashion-MNIST}~\citep{Xiao17} comprise $\nbrsamples=60\,000$ samples,
which means that the width should be considerably smaller than $60\,000$ when using node sparsity alone.
(Note that the input layer does not take part in~\nbrwidth, which means that  $\nbrinput$ could be larger.)

For unconstraint estimation,
one can expect a linear dependence of the error on the  total number of parameters~\citep{Anthony09}.
Our bounds for the sparse estimators, in contrast, only have a~$\log[\nbrtotal]$ dependence on the total number of parameters.
This difference illustrates the virtue of regularization in general,
and the virtue of sparsity in particular.

Both of our bounds have a mild~$\sqrt{\nbrlayers}$ dependence on the depth.
These dependencies align with the results in \citet[Theorem~2]{Golowich17} but considerably improve on the exponentially-increasing dependencies on the depth in~\citet{Neyshabur2015} and, therefore, are particularly suited to describe deep network architectures.
Replacing the conditions $\max_j\normoneM{\parameterEj}\leq 1$ and $\max_j\normtoM{\parameterEj}\leq 1$ in the definitions of the connection-sparse and node-sparse estimators by  the stricter conditions $\sum_j\normoneM{\parameterEj}\leq 1$ and $\sum_j\normtoM{\parameterEj}\leq 1$, respectively (cf.~\citet{Taheri20} and our discussion in Section~\ref{framework}),
the dependence on the depth can be improved further from $\sqrt{\nbrlayers}$ to $(2/\nbrlayers)^{\nbrlayers}\sqrt{\nbrlayers}$\label{layers} (this only requires a simple adjustment of the last  display in the proof of Proposition~\ref{Lipschitz}),
which is  exponentially decreasing in the depth.

Our connection-sparse bounds have a mild~$\log[\nbroutput]$ dependence on the number of output nodes;
the node-sparse bound involve an additional factor~$\sqrt{\nbroutput}$. 
The case of multiple outputs has not been considered in statistical prediction bounds before.

Proposition~\ref{subGauss} also highlights another advantage of our regularization approach over theories such as~\citet{Golowich17} and~\citet{Neyshabur2015} that apply to constraint estimators.
The theories for constraint estimators require bounding the sparsity levels directly,
but in practice, suitable values for these bounds are rarely known.
In our framework, in contrast, 
the sparsity is controlled via tuning parameters indirectly,
and Proposition~\ref{subGauss}---although not providing a complete practical calibration scheme---gives insights into how these tuning parameters should scale with $\nbrsamples$, $\nbrinput$, $\nbrlayers$, and so forth.

\label{pagegeneral}We also note that the  bounds in Theorem~\ref{generalbound} can be generalized readily to every estimator of the form
\begin{equation*}
    \estimatorGen\in\argmin_{\parameter\in\parameterSGen}\Biggl\{\sum_{i=1}^{\nbrsamples}\normtwosB{\outputi-\networkAF{\inputvi}}+\tuningparameterGen\normM{\parameterEl}\Biggr\}\,,
\end{equation*}
where $\tuningparameterGen\in[0,\infty)$ is a tuning parameter, 
$\parameterSGen$ any nonempty subset of~\parameterS, and $\normM{\cdot}$ any norm.
The bound for such an estimator is then
\begin{equation*}
   \prederror{\estimatorGen} \leq \inf_{\parameter\in\parameterSGen}\Bigl\{\prederror{\parameter}+\frac{2\tuningparameterGen}{\nbrsamples}\normM{\parameterl}\Bigr\}
\end{equation*}
for $\tuningparameterGen\geq \tuningparameterOGen$,
where $\tuningparameterOGen$ is as~\tuningparameterOLasso\ but based on the dual norm of~$\normM{\cdot}$ instead of the dual norm of~$\normoneM{\cdot}$.
For example, 
one could impose connection sparsity on some layers and node sparsity on others,
or one could impose  different regularizations altogether.
We omit the details to avoid digression.

The above oracle inequalities bound the prediction error,
a standard measure of accuracy in statistics.
Broadly speaking,
this measure captures ``how well the estimator describes the data-generating process.''
So our comparison with 
\citet{Neyshabur2015} and \citet{Golowich17} might seem questionable,
because they instead bound the generalization error,
a measure that is more common in machine learning and captures ``how well the estimator describes new samples.''
But we can derive such bounds as well.
For simplicity,
we consider a parametric setting  and subgaussian noise again.
We then find the following bounds:
\begin{proposition}[Generalization Guarantees]\label{generror}
Assume that the inputs~$\inputv,\inputv_1,\dots,\inputv_{\nbrsamples}$ are i.i.d.~random vectors, that the noise vectors~$\noise_1,\dots,\noise_{\nbrsamples}$ are independent and centered and have uniformly subgaussian entries,
and that $\tuningparameterLasso=\tuningparameterOLasso,\tuningparameterGroup=\tuningparameterOGroup\to 0$ as $\nbrsamples\to\infty$.
Consider an arbitrary positive constant $b\in(0,\infty)$.
  If $\networkT=\networkAT$ for a $\parameterT\in\parameterSLasso$,
it holds with probability at least $1-1/\nbrsamples$ that
\begin{equation*}
   \operatorname{risk}[\estimatorLasso] \leq (1+b) \operatorname{risk}[\parameterT]+\constbound\facnUs\sqrt{\frac{\nbrlayers\bigl(\log[2\nbroutput\nbrsamples\nbrtotal]\bigr)^3}{\nbrsamples}}\,\normoneM{\parameterTl}
\end{equation*}
for a constant $\constbound\in(0,\infty)$ that depends only on~$b$ and the subgaussian parameters of the noise. 
Similarly,   if $\networkT=\networkAT$ for a $\parameterT\in\parameterSGroup$,
it holds with probability at least $1-1/\nbrsamples$ that
\begin{multline*}
   \operatorname{risk}[\estimatorLasso] \\\leq (1+b) \operatorname{risk}[\parameterT]+\constbound\facnTa\sqrt{\frac{\nbroutput\nbrlayers\nbrwidth\bigl(\log[2\nbroutput\nbrsamples\nbrtotal]\bigr)^3}{\nbrsamples}}\,\normtoM{\parameterTl}
\end{multline*}
for a constant $\constbound\in(0,\infty)$ that depends only on~$b$ and the subgaussian parameters of the noise.
\end{proposition}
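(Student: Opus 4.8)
The plan is to reduce the risk bound to the in-sample prediction bound of Corollary~\ref{parametric} and then to bridge the in-sample and out-of-sample quantities through a localized empirical-process argument. First I would decompose the risk. Writing the new sample as $\output=\networkTF{\inputv}+\noise$ with fresh, centered noise independent of~$\inputv$ and conditioning on the training data (so that $\estimatorLasso$ is fixed), the cross term vanishes and
\begin{equation*}
  \operatorname{risk}[\estimatorLasso]=E_{\inputv}\normtwosB{\networkTF{\inputv}-\networkALassoF{\inputv}}+E\normtwos{\noise}\,.
\end{equation*}
In the parametric setting $\networkT=\networkAT$ the same computation gives $\operatorname{risk}[\parameterT]=E\normtwos{\noise}$, so the two noise variances cancel and the claim reduces to controlling the population prediction error $\overline{\operatorname{err}}[\parameter]\deq E_{\inputv}\normtwosB{\networkTF{\inputv}-\networkAF{\inputv}}$, namely to showing $\overline{\operatorname{err}}[\estimatorLasso]\leq b\,E\normtwos{\noise}+\constbound\facnUs\sqrt{\nbrlayers(\log[2\nbroutput\nbrsamples\nbrtotal])^3/\nbrsamples}\,\normoneM{\parameterTl}$.

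Second, I would bound the in-sample error. On the event $\{\tuningparameterLasso\geq\tuningparameterOLasso\}$, Corollary~\ref{parametric} gives $\prederror{\estimatorLasso}\leq(2\tuningparameterLasso/\nbrsamples)\normoneM{\parameterTl}$; taking $\tuningparameterLasso=\tuningparameterOLasso$ and inserting the subgaussian bound of Proposition~\ref{subGauss} yields, with probability at least $1-1/\nbrsamples$,
\begin{equation*}
  \prederror{\estimatorLasso}\leq 2\constbound\facnUs\sqrt{\frac{\nbrlayers(\log[2\nbroutput\nbrsamples\nbrtotal])^3}{\nbrsamples}}\,\normoneM{\parameterTl}\,,
\end{equation*}
which already has the desired rate.

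Third---and this is the crux---I would pass from $\prederror{\estimatorLasso}$ to $\overline{\operatorname{err}}[\estimatorLasso]$. Since the inputs are i.i.d., the two differ by the centered empirical process $E_{\inputv}[h_{\parameter}]-\tfrac{1}{\nbrsamples}\sum_i h_{\parameter}(\inputvi)$ indexed by $h_{\parameter}(\inputvE)\deq\normtwosB{\networkTF{\inputvE}-\networkAF{\inputvE}}$. Because the summands are nonnegative, a Bernstein/Talagrand concentration bound controls their variance by their mean, so that after a standard peeling over the value of $\overline{\operatorname{err}}[\parameter]$ one obtains, uniformly over the relevant class, $\overline{\operatorname{err}}[\parameter]\leq(1+b)\prederror{\parameter}+C(b)\,\delta_{\nbrsamples}^2$ with $\delta_{\nbrsamples}$ the associated critical radius. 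To apply this to the random $\estimatorLasso$, I would first confine it to a fixed class: the basic inequality defining~\eqref{lasso} together with the feasibility of $\parameterT$ bounds the outermost-layer norm $\normoneM{(\estimatorLassoE)^{\nbrlayers}}$, while $\max_{j<\nbrlayers}\normoneM{(\estimatorLassoE)^j}\leq 1$ holds by construction. The metric entropy of this class is furnished by Lemma~\ref{entropy}, and the Lipschitz property of Proposition~\ref{Lipschitz} converts entropy of the parameters into entropy of the functions $h_{\parameter}$, feeding the chaining-based concentration bound.

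Finally, I would combine the pieces. Since $\tuningparameterOLasso\to 0$, the rate and hence $\delta_{\nbrsamples}^2$ tend to zero, so for $\nbrsamples$ large the additive term $C(b)\delta_{\nbrsamples}^2$ is absorbed into $b\,E\normtwos{\noise}$, while the factor $(1+b)$ multiplying $\prederror{\estimatorLasso}$ is absorbed into the constant~$\constbound$; reassembling with the risk decomposition gives the stated inequality. The node-sparse case is identical after replacing $\normoneM{\cdot}$ by $\normtoM{\cdot}$, $\parameterSLasso$ by $\parameterSGroup$ and $\facnUs$ by $\facnTa$, and using the second bound of Proposition~\ref{subGauss}, which is what produces the factor~$\nbrwidth$ and the extra~$\sqrt{\nbroutput}$. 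I expect the localized empirical-process step to be the main obstacle, chiefly because $h_{\parameter}$ is unbounded when the inputs are unbounded: making the variance-to-mean comparison rigorous will require a moment/tail condition on~$\inputv$ or a truncation argument, together with care that the critical radius matches the $\sqrt{\nbrlayers(\log[2\nbroutput\nbrsamples\nbrtotal])^3/\nbrsamples}$ rate coming from the entropy bound.
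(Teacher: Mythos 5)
Your skeleton coincides with the paper's---reduce to the in-sample bound via Corollary~\ref{parametric} and Proposition~\ref{subGauss}, then control the centered empirical process of $\inputvE\mapsto\normtwosB{\networkTF{\inputvE}-\networkALassoF{\inputvE}}$ uniformly over the sparse class, confining $\normoneM{(\estimatorLassoE)^{\nbrlayers}}$ via the basic inequality and feeding Lemma~\ref{entropy} and Proposition~\ref{Lipschitz} into a chaining bound---but the key middle step is executed by a genuinely different technique. You propose localization: Bernstein/Talagrand with a variance-to-mean comparison and peeling, yielding $(1+b)\prederror{\parameter}+C(b)\delta_{\nbrsamples}^2$ uniformly and then a critical-radius computation. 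The paper instead applies a contraction argument to strip the square, H\"older's inequality to factor out the outer-layer norm, symmetrization to introduce Rademacher multipliers, and then recycles the subgaussian chaining of Proposition~\ref{subGauss} on the un-squared process. The paper's route is designed precisely to dodge the two obstacles you flag at the end: Talagrand's inequality and the variance-to-mean comparison require a bounded envelope, which is unavailable here because the paper explicitly allows unbounded inputs, weights, and noise; and the entropy of Lemma~\ref{entropy} is of the critical $1/\radiusentropy^2$ type, so the Dudley integral diverges and pinning down $\delta_{\nbrsamples}$ requires the same delicate treatment (van de Geer's Corollary~8.3) that the contraction--symmetrization route inherits for free from Proposition~\ref{subGauss}. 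Your exact risk decomposition (cancelling the noise variance against $\operatorname{risk}[\parameterT]$ and keeping $b\,E\normtwos{\noise}$ as slack) is a clean equivalent of the paper's inequality-based decomposition. In short, the plan is sound in outline, but closing the localization step would require an additional truncation or moment argument on the inputs, whereas following the paper's contraction--symmetrization route lets you finish with tools already established earlier in the paper.
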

\noindent Hence, the generalization errors are bounded by the same terms as the prediction errors.

\section{Outlook: Initialization}
\label{sec:init}
Our theoretical results also suggest further research on a practical problem in deep learning:
weight initialization~\citep{glorot2010understanding,he2015delving,mishkin2015all}.
To highlight the connection between our work and weight initialization,
we consider once more our guarantees' dependence on the depth~\nbrlayers.
Proposition~\ref{generror},
for example, 
comprises a sublinear dependence through the factor $\sqrt{\nbrlayers}$ and a logarithmic dependence through the total number of parameters~\nbrtotal\ inside the logarithm---we have discussed these dependencies in detail.
But there is another potential source of dependence on~\nbrlayers:
the factor~$\normoneM{\parameterTl}$.
Naively thinking,
one could suspect that this factor scales exponentially in~\nbrlayers:
the argument would be that the weight matrices of each of the $\nbrlayers-1$ inner layers needs to be rescaled to fit into~\parameterSLasso\ or~\parameterSGroup,
which means that the weight matrix of the outer layer needs to be rescaled by a product of these $\nbrlayers-1$~factors.

The argument is intuitive, but it is wrong:
the problem with it is that the optimal weight matrices~$\parameterTl$ change with the depth of the network, 
while the data-generating process remains unaffected by what function we use to approximate it.
In other words,
we cannot expect a simple relationship between~$\parameterTl$ and~$(\parameterT)^{\nbrlayers-1}$,
but we can expect the overall ``scales'' of the corresponding networks to be similar, that is, 
$\normoneM{\parameterTl}\approx\normoneM{(\parameterT)^{\nbrlayers-1}}$.
Hence,
we can assume that the factor~$\normoneM{\parameterTl}$ in our bounds to be approximately independent of~\nbrlayers.

One can also argue that the recent results on approximation properties of sparse neural networks, such as~\citet{Beknazaryan2021f,Hieber2017}, 
suggest that sparse networks with parameters in~\parameterSLasso\ or~\parameterSGroup\ and fixed $\normoneM{\parameterTl}$ or $\normtoM{\parameterTl}$ norms, respectively, can indeed approximate large classes of functions.

In any case,
we can draw two conclusions:
First, 
our bounds indeed depend on the network depth as advertised.
Second,
our results hint at the fact that initialization schemes should take network depths into account,
and it might be favorable to use sparse initialization schemes rather than distributing weights ``uniformly'' across the entire network.
More generally,
we conclude that the connection between sparse networks and weight initializations might be an interesting topic for further research.

\section{Discussion}
\label{discussion}
\newcommand{\rotatetext}[1]{\rotatebox{60}{\textbf{#1}}}
\newcolumntype{x}[1]{>{\centering\let\newline\\\arraybackslash\hspace{0pt}}p{#1}}
\newcommand{\comparisontable}[1]{
\begin{table*}[#1]
\newcommand{\yesmarkno}{\checkmark}
\newcommand{\nomarkno}{\textcolor{white}{\checkmark}}
\newcommand{\yesmark}{\yesmarkno\vspace{-1.5mm}}
\newcommand{\nomark}{\nomarkno\vspace{-1.5mm}}
\newcommand{\tabwidth}{17mm}
\newcommand{\dashrule}[1][black]{%
  \vspace{-1mm}\rule[\dimexpr.5ex-.2pt]{4pt}{.4pt}\xleaders\hbox{\rule{4pt}{0pt}\rule[\dimexpr.5ex-.2pt]{4pt}{.4pt}}\hfill\kern0pt%
}
\newcommand{\pushdown}{\vspace{1mm}}
  \centering
\footnotesize
\begin{tabular}{l l l}
 \pushdown   \textbf{Approach}&\textbf{Mathematical techniques}&\textbf{References}\\
\pushdown FatShat&Fat-shattering dimension&\citet{Bartlett1998}\\
    RadCon&Rademacher complexity&\citet{Bartlett2002,Golowich17}\\
\pushdown   &&\citet{Neyshabur2015,Lederer20c}\\
    RadNode&Rademacher complexity&\citet{Bartlett2002,Neyshabur2015}\\
\pushdown    &&\citet{Lederer20c}\\
\pushdown NonPar&Non-parametric statistics&\citet{Hieber2017}\\
HighDim&High-dimensional statistics&\citet{Taheri20}\\
&\& concentration inequalities&\\
  \end{tabular}
  \begin{tabular}{@{}l l x{\tabwidth} x{\tabwidth} x{\tabwidth} x{\tabwidth} x{\tabwidth} x{\tabwidth}@{}}
    \rotatetext{Approach}&\rotatetext{Feasible}&\rotatetext{$\ell_2$-loss}&\rotatetext{Prediction error}&\rotatetext{Regularized}&\rotatetext{(sub)linear in $\nbrlayers$}&\rotatetext{node sparsity}&\rotatetext{multiple outputs}\\
\midrule
FatShat~~~~~~~&\yesmark&\nomark&\nomark&\nomark&\nomark&\nomark&\nomark\\
\multicolumn{8}{c}{\dashrule}\\
RadCon&\yesmark&\nomark&\nomark&\nomark&\nomark&\yesmark&\nomark\\
\multicolumn{8}{c}{\dashrule}\\
RadNode&\yesmark&&&\yesmark&&\nomark\\
\multicolumn{8}{c}{\dashrule}\\
NonPar&\nomark&\yesmark&~\vspace{-1mm}&\nomark&\yesmark&~\vspace{-1mm}&~\vspace{-1mm}\\
\multicolumn{8}{c}{\dashrule}\\
    HighDim&\yesmark&\yesmark&\yesmark&\yesmark&\yesmark&&\vspace{1mm}\\
\bottomrule\vspace{1mm}
  \end{tabular}
  \caption{Presence (\yesmarkno) or absence (\nomarkno) of certain features in previous statistical theories for sparse deep learning.
We extend the HighDim results to node sparsity and multiple outputs.
Moreover, we improve the dependence of the HighDim bounds on the data,
and we avoid their auxiliary parameters.
}
\label{tab:comparison}
\end{table*}}
\newl{We have developed guarantees for sparse deep learning both in terms of the prediction error (Theorems~\ref{generalbound} and Corollary~\ref{parametric} together with Proposition~\ref{subGauss}),
a standard measure of accuracy in statistics,
and in terms of the generalization error (Proposition~\ref{generror}),
a standard measure of accuracy in machine learning.
These results extend and complement existing guarantees in the literature---see~Table~\ref{tab:comparison} below.

\comparisontable{t}

Even though many deep-learning applications fall into the framework of classification,
we have focussed on regression with least-squares loss.
The reason is that the regression setting is much more challenging:
since the loss is unbounded, 
many of the techniques regularly used in classification (like McDiarmid's inequality~\citep[Lemma~(3.3)]{McDiarmid89}) are not applicable.
In this sense, 
our derivations are more general,
and we expect that our approach will provide very similar classifications bounds in the future as well (see Appendix~\ref{extensions} for possible extensions more generally).

Evidence for the benefits of deep networks has been established in practice~\citep{Lecun15,Schmidhuber15}, 
approximation theory~\citep{Liang16,Telgarsky16,Yarotsky17},
and statistics~\citep{Taheri20,kohler2019estimation}.
Since our guarantees scale at most sublinearly in the number of layers (or even improve with increasing depth---see our comment on Page~\pageref{layers}),
our paper complements these lines of research and shows that sparsity-inducing regularization is an effective approach to coping with the complexity of deep and very deep networks.

While previous theories mostly considered connection sparsity (small number of active connections between nodes),
we also include node sparsity (small number of active nodes).
Moreover, as discussed on Page~\pageref{pagegeneral}, 
Theorem~\ref{generalbound} can be readily extended to any norm-based regularization.
Hence, it is straightforward to adjust our results to granularities between connection and node sparsity---cf.~\citet{mao2017exploring}.
On the other hand,
our techniques do not seem appropriate for ``hard-coded'' types of sparsity,
such as 2:4 (``two-to-four'') sparsity \citep{Mishra2021}.

Connection sparsity limits the number of nonzero entries in each parameter matrix,
while node sparsity only limits the total number of nonzero rows.
Hence, the number of columns in a parameter matrix, 
that is, the width of the preceding layer,
is regularized only in the case of connection sparsity.
Our theoretical results reflect this insight in that the bounds for the connection- and node-sparse estimators depend on the networks' width logarithmically and sublinearly, respectively.
 Practically speaking, 
 our results indicate that connection sparsity is suitable to handle wide networks, 
but node sparsity is suitable for wide networks only when complemented by connection sparsity or other strategies.

The mild logarithmic dependence of our connection-sparse bounds on the number of output nodes illustrates that networks with many outputs can be learned in practice.
Our prediction theory is the first one to consider multiple output nodes;
a classification theory with a logarithmic dependence on the output nodes has been established very recently in~\citet{Ledent2019}.

The mathematical underpinnings of our theory are very different from those of most other papers in theoretical deep learning.
The proof of the main theorem shares similarities with proofs in high-dimensional statistics,
such as the concept of the effective noise~\citep{22LedererBook}.
The treatments of the relevant empirical processes use metric entropy, chaining, and Lipschitz properties of neural networks.
These concepts and tools are not standard in deep learning  and, therefore,
might be of more general interest (see again Appendix~\ref{extensions} for further ideas).

Our theory has three limitations:
First, the bounds apply only to global optima of the optimization landscapes rather than local optima or other points in which certain algorithms might be trapped.
However,
there is evidence that global optimization can be feasible at least in wide and deep networks \citep{lederer2020no}.
Second,
the theory does not entail a practical scheme for the calibration of the tuning parameters.
However,
the inclusion of regularization (rather than constraints) is already a step forward,
because it reveals how the tuning parameters should scale with the problem dimensions (see our Proposition~\ref{subGauss}).
Third,
the network architecture is limited to fully-connected feedforward layers,
which excludes some aspects of modern pipelines (such as convolutions, dropout, and so forth).
In any case, all three limitations are open problems in the literature;
in particular,
the mentioned limitations are shared by most theories on the topic.

We can summarize what this paper contributes---and what it does not---as follows:
From a practical perspective, 
it is well established that  sparsity can benefit deep learning,
and there are several methods to generate sparsity in practice.
Thus, this paper does not provide new practical insights or methods.
Instead, our paper (i)~backs up these practical observations with statistical theories that are more general and closer to practice than previous theories, 
and it (ii)~establishes refined concepts and techniques for the statistical analysis of deep learning more generally.
}

\ifarXiv
\subsubsection*{Acknowledgments}
I thank Shih-Ting Huang, Mahsa Taheri, Fang Xie, and the anonymous referees for their insightful comments on a draft version of this paper.
\fi

\newpage
\onecolumn
\appendix
\section{Appendix}\label{appendix}
The Appendix consists of two auxiliary results and the proofs of Theorem~\ref{generalbound} and Propositions~\ref{uniqueness} and~\ref{subGauss}.
Our approach combines techniques from high-dimensional statistics and empirical-process theory that are very different from the techniques used in most other approaches in the literature.


\subsection{Lipschitz Property}\label{sec:Lipschitz}
In this section,
we prove a Lipschitz property that we use in the proof of Proposition~\ref{subGauss}.
\begin{proposition}[Lipschitz Property]\label{Lipschitz}
In the framework of Sections~\ref{framework} and~\ref{guarantees},
it holds for all $\parameterR,\parameterGR\in\parameterSRLasso$ that
  \begin{equation*}
    \normsupB{\networkARF{\inputv}-\networkARRF{\inputv}}\leq \sqrt{\nbrlayers}\normsup{\inputv}\normF{\parameterR-\parameterGR}
  \end{equation*}
and for all $\parameterR,\parameterGR\in\parameterSRGroup$ that
  \begin{equation*}
    \normtwoB{\networkARF{\inputv}-\networkARRF{\inputv}}\leq \sqrt{\nbrlayers}\normtwo{\inputv}\normF{\parameterR-\parameterGR}\,.
  \end{equation*}
\end{proposition}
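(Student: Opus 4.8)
The plan is to prove both inequalities by peeling the network apart one layer at a time and propagating a single-layer perturbation through the remaining layers. I would write the linearized network through its activity vectors $\boldsymbol a^0=\inputv$ and $\boldsymbol a^{j+1}=\activation^{j+1}[\parameterRj\boldsymbol a^j]$, so that $\networkARF{\inputv}=\boldsymbol a^{\nbrlayers}$, and analogously $\overline{\boldsymbol a}^{j}$ for $\parameterGR$. The two ingredients I would isolate first are (i) that each layer map $\boldsymbol z\mapsto\activation^{j+1}[\parameterRj\boldsymbol z]$ is non-expansive in the relevant norm, and (ii) that a single-layer perturbation is controlled by the Frobenius norm of the corresponding matrix difference. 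Ingredient (i) follows because $\activation^{j+1}[\zero]=\zero$ together with $1$-Lipschitzness gives $\normtwoB{\activation^{j+1}[\boldsymbol u]-\activation^{j+1}[\boldsymbol v]}\le\normtwo{\boldsymbol u-\boldsymbol v}$, while the constraints $\normtoM{\parameterRj}\le 1$ (respectively $\normoneM{\parameterRj}\le 1$) bound the operator norm of $\parameterRj$ by one via $\normO{\parameterRj}\le\normF{\parameterRj}\le\normtoM{\parameterRj}$ (respectively $\le\normoneM{\parameterRj}$); in particular the hidden activations never grow, $\normtwo{\overline{\boldsymbol a}^{j}}\le\normtwo{\inputv}$.

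With these in hand I would run the elementary recursion $\normtwoB{\boldsymbol a^{j+1}-\overline{\boldsymbol a}^{j+1}}\le\normtwoB{\boldsymbol a^{j}-\overline{\boldsymbol a}^{j}}+\normtwoB{(\parameterRj-\parameterGRE^{j})\overline{\boldsymbol a}^{j}}$, obtained by adding and subtracting $\parameterRj\overline{\boldsymbol a}^{j}$ inside the $1$-Lipschitz activation and then using non-expansiveness of $\parameterRj$. Since the two networks share the same input, the recursion telescopes to $\normtwoB{\networkARF{\inputv}-\networkARRF{\inputv}}\le\sum_{j=0}^{\nbrlayers-1}\normtwoB{(\parameterRj-\parameterGRE^{j})\overline{\boldsymbol a}^{j}}$. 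Bounding each summand by $\normO{\parameterRj-\parameterGRE^{j}}\normtwo{\overline{\boldsymbol a}^{j}}\le\normF{\parameterRj-\parameterGRE^{j}}\normtwo{\inputv}$ (operator norm at most Frobenius norm, plus the activation bound) and then applying Cauchy--Schwarz over the $\nbrlayers$ layers, $\sum_{j}\normF{\parameterRj-\parameterGRE^{j}}\le\sqrt{\nbrlayers}\,\normF{\parameterR-\parameterGR}$, yields the node-sparse inequality directly. This half of the argument stays in $\ell_2$ throughout and should cause no difficulty.

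For the connection-sparse inequality the same skeleton applies, but the norm bookkeeping is the delicate point and is where I expect the real work to lie: the target measures both the output and the input in $\ell_\infty$, yet the parameter difference in the Frobenius norm, and these do not line up through a single norm. My plan is to still propagate the perturbation in $\ell_2$ (so that ingredient (ii) remains available) and to pass to $\ell_\infty$ only at the very end through $\normsup{\cdot}\le\normtwo{\cdot}$. The crucial conversion is to turn the $\ell_\infty$ input norm into an $\ell_2$ bound on the hidden activations: since $\normoneM{\parameterRj}\le 1$, one has $\normtwo{\parameterRj\boldsymbol z}\le\normoneM{\parameterRj}\normsup{\boldsymbol z}\le\normsup{\boldsymbol z}$ at the first layer, after which the remaining layers are $\ell_2$-non-expansive, so that $\normtwo{\overline{\boldsymbol a}^{j}}\le\normsup{\inputv}$ for the inner layers. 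I anticipate the main obstacle to be the perturbation term acting directly on the input (the $j=0$ summand), where the matrix difference multiplies $\inputv$ itself: reconciling $\normtwoB{(\parameterRE^{0}-\parameterGRE^{0})\inputv}$ with the product $\normF{\parameterRE^{0}-\parameterGRE^{0}}\normsup{\inputv}$ must exploit the $\ell_1$ geometry of the constraint set rather than a blunt operator-norm estimate, since a naive bound only delivers $\normtwo{\inputv}$ in place of $\normsup{\inputv}$. This is the step I would scrutinize most carefully; modulo it, Cauchy--Schwarz over the layers again supplies the $\sqrt{\nbrlayers}$ factor and closes the argument.
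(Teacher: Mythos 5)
Your node-sparse ($\ell_2$) argument is correct and is, in substance, the paper's own proof. The paper telescopes by splitting the network at every layer into an outer subnetwork (shown to be $1$-Lipschitz via $\normO{\parameterRE^k}\le\normF{\parameterRE^k}\le\normtoM{\parameterRE^k}\le1$) and an inner subnetwork (shown to be a contraction), which is exactly your forward recursion on activations written from the outside in: both reduce the difference to $\sum_{j}\normtwo{(\parameterRE^{j}-\parameterGRE^{j})\,\overline{\boldsymbol a}^{j}}$, bound each summand by $\normF{\parameterRE^{j}-\parameterGRE^{j}}\normtwo{\inputv}$, and finish with Cauchy--Schwarz over the $\nbrlayers$ layers.

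The connection-sparse step you single out is a genuine gap, and it cannot be closed in the form you need: the first-layer inequality $\normtwoB{(\parameterRE^{0}-\parameterGRE^{0})\inputv}\le\normF{\parameterRE^{0}-\parameterGRE^{0}}\normsup{\inputv}$ (or its $\ell_\infty$ variant) is false even on the constraint set. Take $\parameterRE^{0}=\tfrac{1}{\nbrinput}\one\tp\in\R^{1\times\nbrinput}$ (so $\normoneM{\parameterRE^{0}}=1$), $\parameterGRE^{0}=0$, and $\inputv=\one$: the left-hand side equals $1$ while $\normF{\parameterRE^{0}-\parameterGRE^{0}}\normsup{\inputv}=1/\sqrt{\nbrinput}$; with $\nbrlayers=1$ and ReLU activation this is in fact a counterexample to the displayed connection-sparse inequality itself. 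What the $\ell_1$ geometry actually buys is $\normsupB{(\parameterRE^{0}-\parameterGRE^{0})\inputv}\le\normoneM{\parameterRE^{0}-\parameterGRE^{0}}\normsup{\inputv}$, and since $\normoneM{\cdot}\ge\normF{\cdot}$ this cannot be converted into the Frobenius form. Note that the paper's own treatment of this case is only a closing remark---replace $\normtwo{\cdot},\normO{\cdot}$ by $\normsup{\cdot},\normoneM{\cdot}$ and use $\normsup{A\boldsymbol b}\le\normoneM{A}\normsup{\boldsymbol b}$---which lands on $\bigl(\sum_{j}\normoneM{\parameterRE^{j}-\parameterGRE^{j}}\bigr)\normsup{\inputv}$ and stalls at exactly the same point, because the final Cauchy--Schwarz step would require $\normoneM{\cdot}\le\normF{\cdot}$. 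So your instinct about where the real work lies is exactly right; the statement should be repaired by replacing $\normF{\parameterR-\parameterGR}$ with the layerwise-$\ell_1$ aggregate $\sqrt{\sum_{j}\normoneM{\parameterRE^{j}-\parameterGRE^{j}}^2}$ (your hybrid route then salvages the Frobenius norm for every layer except the first), rather than by further massaging the $j=0$ term.
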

\noindent 
The Frobenius norm is defined as
\begin{equation*}
   \normF{\parameterR}\deq\sqrt{\sum_{j=0}^{\nbrlayers-1}\normF{\parameterRj}^2}\deq\sqrt{\sum_{j=0}^{\nbrlayers-1}\sum_{i=1}^{\nbrparameterjj}\sum_{k=1}^{\nbrparameterj}\abs{(\parameterRj)_{ik}}^2}~~~~~~~~~~\text{for}~\parameterR\in\parameterSRGroup=\parameterSRLasso\cup\parameterSRGroup\,.
\end{equation*}

Proposition~\ref{Lipschitz} generalizes \cite[Proposition~2]{Taheri20} to vector-valued network outputs and to node sparsity, 
and it replaces their~$\normtwo{\inputv}$ with the smaller~$\normsup{\inputv}$ in the connection-sparse case.

\begin{proof}[Proof of Proposition~\ref{Lipschitz}]
This proof generalizes and sharpens the proof of \cite{Taheri20},
and it simplifies some arguments of that proof.
  We define the ``inner subnetworks'' of a network~$\networkAR$ with $\parameterR\in\parameterSRGroup$  as the vector-valued functions
\begin{align*}
   \Aplus_{0}\networkAR\ :\ \R^{\nbrinput}&\,\to\,\R^{\nbrparameter^1}\\
   \inputv&\,\mapsto\,\Aplus_{0}\networkARF{\inputv}\deq\parameterRE^0\inputv
\end{align*}
and
\begin{align*}
   \Aplus_{j}\networkAR\ :\ \R^{\nbrinput}&\,\to\,\R^{\nbrparameterjj}\\
   \inputv&\,\mapsto\,\Aplus_{j}\networkARF{\inputv}\deq\parameterRE^{j}\activation^{j}\bigl[\cdots\activation^{1}[\parameterRE^{0}\inputv]\bigr]
\end{align*}
for~$j\in\{1,\dots,\nbrlayers-1\}$.
Similarly,
we define the ``outer subnetworks'' of~$\networkAR$ as  the real-valued functions
\begin{align*}
   \Aminus^{j}\networkAR\ :\ \R^{\nbrparameterj}&\,\to\,\R^{\nbrparameterl}\\
   \boldsymbol{z}&\,\mapsto\,\Aminus^{j}\networkARF{\boldsymbol{z}}\deq\activation^{\nbrlayers}\bigl[\parameterRE^{\nbrlayers-1}\cdots\activation^{j}[\boldsymbol{z}]\bigr]
\end{align*}
for~$j\in\{1,\dots,\nbrlayers-1\}$  and
\begin{align*}
  \Aminus^{\nbrlayers}\networkAR\ :\ \R^{\nbrparameterl}&\,\to\,\R^{\nbrparameterl}\\
   \boldsymbol{z}&\,\mapsto\,\Aminus^{\nbrlayers}\networkARF{\boldsymbol{z}}\deq\activation^{\nbrlayers}[\boldsymbol{z}]\,.
\end{align*}
The initial network can be split into an inner and an outer network along every layer~$j\in\{1,\ldots,\nbrlayers\}$:
 \begin{equation*}
    \networkARF{\inputv}=\Aminus^{j}\networkARFB{\Aplus_{j-1}\networkARF{\inputv}}~~~~~~~~~~\text{for}~\inputv\in\R^{\nbrinput}\,.
\end{equation*}
We call this our \emph{splitting argument}.

To exploit the splitting argument,
we derive a contraction result for the inner subnetworks and a Lipschitz result for  the outer subnetworks.
We denote the $\ell_2$-operator norm of a matrix~$A$, that is, 
the largest singular value of~$A$, by~$\normO{A}$.
Using then the assumptions that the activation functions are $1$-Lipschitz and $\activation^j[\zero_{\nbrparameterj}]=\zero_{\nbrparameterj}$,
we get for every~$\parameterR=(\parameterRE^{\nbrlayers-1},\dots,\parameterRE^0)\in\parameterSRGroup$ and~$\inputv\in \R^{\nbrinput}$ that 
\begingroup
\allowdisplaybreaks
\begin{align*}
 \normtwoB{\Aplus_{j-2}\networkARF{\inputv}} 
 &=\normtwoB{\parameterRE^{j-2}\activation^{j-2}\bigl[{\Aplus_{j-3}\networkARF{\inputv}}\bigr]}\\
 &\leq 
 \normO{\parameterRE^{j-2}}\normtwoB{\activation^{j-2}\bigl[\Aplus_{j-3}\networkARF{\inputv}\bigr]}\\
 &\leq  \normO{\parameterRE^{j-2}}\normtwoB{\Aplus_{j-3}\networkARF{\inputv}} \\
 &\leq \cdots\\
 &\leq \biggl(\prod_{k=1}^{j-2}\normO{\parameterRE^k}\biggr) \normtwo{\parameterRE^0\inputv}\\
 &\leq  \biggl(\prod_{k=0}^{j-2}\normO{\parameterRE^k}\biggr)\normtwo{\inputv}
\end{align*}
\endgroup
for all~$j\in\{2,\ldots,\nbrlayers\}$.
Now, since $\normO{\parameterRE^k}\leq \normF{\parameterRE^k}\leq \normtoM{\parameterRE^k}$ and $\parameterR\in\parameterSRGroup$,
we can deduce from the display that
\begin{equation*}
  \normtwoB{\Aplus_{j-2}\networkARF{\inputv}} \leq \biggl(\prod_{k=0}^{j-2}\normtoM{\parameterRE^k}\biggr) \normtwo{\inputv}\,.
\end{equation*}
This inequality is our \emph{contraction property}.

By similar arguments,
we get for every~$\boldsymbol{z}_1,\boldsymbol{z}_2 \in 
\R^{\nbrparameterj}$ that
\begingroup
\allowdisplaybreaks
\begin{align*}
 &\normtwoB{\Aminus^{j}\networkARF{\boldsymbol{z}_1}-\Aminus^{j}\networkARF{\boldsymbol{z}_2}}\\
 &=\normtwoB{\activation^{\nbrlayers}\bigl[\parameterRE^{\nbrlayers-1}\cdots\activation^{j}[\boldsymbol{z}_1]\bigr]-\activation^{\nbrlayers}\bigl[\parameterRE^{\nbrlayers-1}\cdots\activation^{j}[\boldsymbol{z}_2]\bigr]} \\
&\leq \normtwoB{\parameterRE^{\nbrlayers-1}\bigl[\activation^{\nbrlayers-1}\cdots\activation^{j}[\boldsymbol{z}_1]\bigr]-\parameterRE^{l-1}\bigl[\activation^{\nbrlayers-1}\cdots \activation^{j}[\boldsymbol{z}_2]\bigr]}\\
&\leq  \normO{\parameterRE^{\nbrlayers-1}}\normtwoB{\activation^{\nbrlayers-1}\bigl[\cdots\activation^{j}[\boldsymbol{z}_1]\bigr]-\activation^{\nbrlayers-1}\bigl[\cdots \activation^{j}[\boldsymbol{z}_2]\bigr]}\\
  &\leq \cdots \\
  &\leq  \biggl(\prod_{k=j}^{\nbrlayers-1}\normO{\parameterRE^k}\biggr)\normtwo{\boldsymbol{z}_1-\boldsymbol{z}_2}
\end{align*}
\endgroup
for~$j\in\{1,\ldots,\nbrlayers\}$,
where $\prod_{k=\nbrlayers}^{\nbrlayers-1}\normO{\parameterRE^k}\deq1$. 
Hence, similarly as above,
\begin{equation*}
  \normtwoB{\Aminus^{j}\networkARF{\boldsymbol{z}_1}-\Aminus^{j}\networkARF{\boldsymbol{z}_2}}\leq \biggl(\prod_{k=j}^{\nbrlayers-1}\normtoM{\parameterRE^k}\biggr)  \normtwo{\boldsymbol{z}_1-\boldsymbol{z}_2}\,.
\end{equation*}
This inequality is our \emph{Lipschitz property}.

We now use the contraction and Lipschitz properties of the subnetworks to derive a Lipschitz result for the entire network.
We consider two networks~$\networkAR$ and~$\networkARR$ with parameters~$\parameterR=(\parameterRE^{\nbrlayers-1},\dots,\parameterRE^0)\in\parameterSRGroup$ and~$\parameterGR=(\parameterGRE^{\nbrlayers-1},\dots,\parameterGRE^{0}) \in\parameterSRGroup$, respectively. 
    Our above-derived splitting argument 
    applied with~$j=1$ and~$j=\nbrlayers$, respectively,
    yields
   \begin{equation*}
     \normtwoB{\networkARF{\inputv}-\networkARRF{\inputv}}
    =\normtwoB{\Aminus^1\networkARFB{\Aplus_0\networkARF{\inputv}}-\Aminus^{\nbrlayers}\networkARRFB{\Aplus_{\nbrlayers-1}\networkARRF{\inputv}}}\,.  
   \end{equation*}
   Elementary algebra and 
   the fact that~$\Aminus^{j-1}\networkARF{\Aplus_{j-2}\networkARRF{\inputv}}=\Aminus^{j}\networkAR[\parameterRE^{j-1}{\activation^{j-1}[\Aplus_{j-2}\networkARRF{\inputv}]}$ for $j\in\{2,\dots,\nbrlayers\}$ then allow us to derive
   \begingroup
   \allowdisplaybreaks
   \begin{align*}
    &\normtwoB{\networkARF{\inputv}-\networkARRF{\inputv}} \\
    &=\normtwoBB{\Aminus^1\networkARFB{\Aplus_0\networkARF{\inputv}}-\sum_{j=1}^{\nbrlayers}\Bigl(\Aminus^{j}\networkARFB{\Aplus_{j-1}\networkARRF{\inputv}}-\Aminus^{j}\networkARFB{\Aplus_{j-1}\networkARRF{\inputv}}\Bigr)-\Aminus^{\nbrlayers}\networkARRFB{\Aplus_{\nbrlayers-1}\networkARRF{\inputv}}}\\
    &=\normtwoBB{\Aminus^1\networkARFB{\Aplus_0\networkARF{\inputv}}-\Aminus^1\networkARFB{\Aplus_0\networkARRF{\inputv}}\\
    &~~~~-\sum_{j=2}^{\nbrlayers}\Bigl(\Aminus^{j}\networkARFB{\Aplus_{j-1}\networkARRF{\inputv}}-\Aminus^{j-1}\networkARFB{\Aplus_{j-2}\networkARRF{\inputv}}\Bigr)\\
    &~~~~+\Aminus^{\nbrlayers}\networkARFB{\Aplus_{\nbrlayers-1}\networkARRF{\inputv}}-\Aminus^{\nbrlayers}\networkARRFB{\Aplus_{\nbrlayers-1}\networkARRF{\inputv}}}\\
    &=\normtwoBB{\Aminus^1\networkARFB{\Aplus_0\networkARF{\inputv}}-\Aminus^1\networkARFB{\Aplus_0\networkARRF{\inputv}}\\
    &~~~~-\sum_{j=2}^{\nbrlayers}\Bigl(\Aminus^{j}\networkARFB{\Aplus_{j-1}\networkARRF{\inputv}}-\Aminus^{j}\networkARFB{\parameterRE^{j-1}\activation^{j-1}\bigl[\Aplus_{j-2}\networkARRF{\inputv}\bigr]}\Bigr)\\
    &~~~~+\Aminus^{\nbrlayers}\networkARFB{\Aplus_{\nbrlayers-1}\networkARRF{\inputv}}-\Aminus^{\nbrlayers}\networkARRFB{\Aplus_{\nbrlayers-1}\networkARRF{\inputv}}}\\
    &\leq\normtwoB{\Aminus^1\networkARFB{\Aplus_0\networkARF{\inputv}}-\Aminus^1\networkARFB{\Aplus_0\networkARRF{\inputv}}}\\
    &~~~~+\sum_{j=2}^{\nbrlayers}\normtwoB{\Aminus^{j}\networkARFB{\Aplus_{j-1}\networkARRF{\inputv}}-\Aminus^{j}\networkARFB{\parameterRE^{j-1}\activation^{j-1}\bigl[\Aplus_{j-2}\networkARRF{\inputv}\bigr]}}\\
    &~~~~+\normtwoB{\Aminus^{\nbrlayers}\networkARFB{\Aplus_{\nbrlayers-1}\networkARRF{\inputv}}-\Aminus^{\nbrlayers}\networkARRFB{\Aplus_{\nbrlayers-1}\networkARRF{\inputv}}}\,.
\end{align*}
\endgroup
 We bound this further by using the above-derived Lipschitz property of the outer networks and the observation that $\Aminus^{\nbrlayers}\networkARF{\Aplus_{\nbrlayers-1}\networkARRF{\inputv}}=\Aminus^{\nbrlayers}\networkARRF{\Aplus_{\nbrlayers-1}\networkARRF{\inputv}}$:
 \begin{multline*}
   \normtwoB{\networkARF{\inputv}-\networkARRF{\inputv}}
\leq\biggl(\prod_{k=1}^{\nbrlayers-1}\normtoM{\parameterRE^k}\biggr)\normtwoB{\Aplus_0\networkARF{\inputv}-\Aplus_0\networkARRF{\inputv}}
   \\+\sum_{j=2}^{\nbrlayers}\biggl(\prod_{k=j}^{\nbrlayers-1}\normtoM{\parameterRE^k}\biggr)\normtwoB{\Aplus_{j-1}\networkARRF{\inputv}-\parameterRE^{j-1}\activation^{j-1}\bigl[\Aplus_{j-2}\networkARRF{\inputv}\bigr]}\,,
 \end{multline*}
which is by the definition of the inner networks equivalent to
 \begin{multline*}
   \normtwoB{\networkARF{\inputv}-\networkARRF{\inputv}}
\leq\biggl(\prod_{k=1}^{\nbrlayers-1}\normtoM{\parameterRE^k}\biggr)\normtwo{\parameterRE^0\inputv-\parameterGRE^0\inputv}
   \\+\sum_{j=2}^{\nbrlayers}\biggl(\prod_{k=j}^{\nbrlayers-1}\normtoM{\parameterRE^k}\biggr)\normtwoB{\parameterGRE^{j-1}\activation^{j-1}\bigl[\Aplus_{j-2}\networkARRF{\inputv}\bigr]-\parameterRE^{j-1}\activation^{j-1}\bigl[\Aplus_{j-2}\networkARRF{\inputv}\bigr]}\,.
 \end{multline*}
Using the properties of the operator norm, 
we can deduce from this inequality that
 \begin{equation*}
   \normtwoB{\networkARF{\inputv}-\networkARRF{\inputv}}
\leq\biggl(\prod_{k=1}^{\nbrlayers-1}\normtoM{\parameterRE^k}\biggr)\normO{\parameterRE^0-\parameterGRE^0}\normtwo{\inputv}
   \\+\sum_{j=2}^{\nbrlayers}\biggl(\prod_{k=j}^{\nbrlayers-1}\normtoM{\parameterRE^k}\biggr)\normO{\parameterGRE^{j-1}-\parameterRE^{j-1}}\normtwoB{\activation^{j-1}\bigl[\Aplus_{j-2}\networkARRF{\inputv}\bigr]}\,.
 \end{equation*}
Invoking the mentioned conditions on the activation functions and the contraction property for the inner subnetworks then yields
 \begin{align*}
   \normtwoB{\networkARF{\inputv}-\networkARRF{\inputv}}&\leq
\Biggl(   \max_{v\in\{0,\dots,\nbrlayers-1\}}\prod_{\substack{k\in\{0,\dots,\nbrlayers-1\}\\k\neq v}}\max\bigl\{\normtoM{\parameterRE^k},\normtoM{\parameterGRE^k}\bigr\}\Biggr)  \biggl(\sum_{j=0}^{\nbrlayers-1}\normO{\parameterGRE^{j}-\parameterRE^{j}}\biggr)\normtwo{\inputv}\\
&\leq \sqrt{\nbrlayers}\normtwo{\inputv}\normF{\parameterR-\parameterGR}\,.
 \end{align*}

The proof for the connection-sparse case is almost the same.
The main difference is that one needs to use the $\normsup{\cdot}$- and $\normoneM{\cdot}$-norms (rather than the $\normtwo{\cdot}$- and $\normO{\cdot}$-norms) 
and the inequality $\normsup{A\boldsymbol{b}}\leq\normoneM{A}\normsup{\boldsymbol{b}}$ (rather than the inequality $\normtwo{A\boldsymbol{b}}\leq\normO{A}\normtwo{\boldsymbol{b}}$)
to establish suitable contraction and Lipschitz properties.
 \end{proof}

\subsection{Entropy Bound}
In this section, we establish bounds for the entropies of~\parameterSRLasso\ and~\parameterSRGroup.
The distance between two networks~$\networkAR$ and~$\networkAGR$ is defined as $\distance{\networkAR}{\networkAGR}\deq\sqrt{\sum_{i=1}^{\nbrsamples}\normsups{\networkARF{\inputvi}-\networkAGRF{\inputvi}}/\nbrsamples}$.
Given this distance function and a radius $t\in(0,\infty)$,
the metric entropy of a nonempty set $\mathcal A\subset \{\parameterR=(\parameterRE^{\nbrlayers-1},\dots,\parameterRE^0)\, :\, \parameterRE^j\in\R^{\nbrparameterjj\times\nbrparameterj}\}$  is denoted by $H[t,\mathcal A]$.
We then get the following entropy bounds.
\begin{lemma}[Entropy Bounds]\label{entropy}
In the framework of Sections~\ref{framework} and~\ref{guarantees},
it holds for a constant~$\constantentropy\in(0,\infty)$ and every $\radiusentropy\in(0,\infty)$ that
  \begin{equation*}
    H[\radiusentropy,\parameterSRLasso]\leq \constantentropy\biggl\lceil\frac{(\facnUs)^2\nbrlayers}{\radiusentropy^2}\biggr\rceil\log\biggl[\frac{\nbrtotal\radiusentropy^2}{(\facnUs)^2\nbrlayers}+2\biggr]
  \end{equation*}
and
  \begin{equation*}
    H[\radiusentropy,\parameterSRGroup]\leq \constantentropy\biggl\lceil\frac{(\facnUs)^2\nbrlayers\nbrwidth}{t^2}\biggr\rceil\log\biggl[\frac{\nbrtotal\radiusentropy^2}{(\facnUs)^2\nbrlayers}+2\biggr]\,.
  \end{equation*}
\end{lemma}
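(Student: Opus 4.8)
The plan is to reduce the entropy of the two network classes, measured in the data-dependent distance $\distance{\networkAR}{\networkARR}$, to the Frobenius-norm entropy of the parameter sets $\parameterSRLasso$ and $\parameterSRGroup$, and then to control the latter by a Maurey-type empirical-approximation argument. The reduction is supplied by Proposition~\ref{Lipschitz}: since $\normsupB{\networkARF{\inputv}-\networkARRF{\inputv}}\leq\sqrt{\nbrlayers}\,\normsup{\inputv}\,\normF{\parameterR-\parameterGR}$, averaging the squares over the $\nbrsamples$ fixed inputs yields $\distance{\networkAR}{\networkARR}\leq\sqrt{\nbrlayers}\,\facnUs\,\normF{\parameterR-\parameterGR}$. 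Hence every Frobenius $\eta$-net of $\parameterSRLasso$ is mapped to a $\distance{\cdot}{\cdot}$-net of radius $\sqrt{\nbrlayers}\,\facnUs\,\eta$, so $H[\radiusentropy,\parameterSRLasso]\leq H_{\mathrm F}[\radiusentropy/(\sqrt{\nbrlayers}\,\facnUs),\parameterSRLasso]$, where $H_{\mathrm F}$ denotes metric entropy for $\normF{\cdot}$. The node case is handled identically, now bounding $\normsup{\cdot}$ by $\normtwo{\cdot}$ and invoking the second inequality of Proposition~\ref{Lipschitz}.

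It then remains to bound the Frobenius entropy of the constraint sets. For the connection-sparse case the governing object is the unit $\ell_1$-ball, and the key sublemma I would prove is that the unit $\ell_1$-ball in $\R^{D}$ has Euclidean covering number $\log N(\eta)\leq C\lceil\eta^{-2}\rceil\log[D\eta^2+2]$. The argument is the standard one: any $\ell_1$-unit vector is a convex combination of the signed unit vectors $\pm e_k$, so sampling $m=\lceil\eta^{-2}\rceil$ of them produces an average whose expected squared Euclidean error is at most $1/m\leq\eta^2$; counting the multisets of $m$ signed coordinates out of $D$ directions gives a binomial bound that, in the relevant regime, equals $\lceil\eta^{-2}\rceil\log[D\eta^2+2]$. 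Applying this with $D=\nbrtotal$ and substituting $\eta=\radiusentropy/(\sqrt{\nbrlayers}\,\facnUs)$ converts $\eta^{-2}$ into $(\facnUs)^2\nbrlayers/\radiusentropy^2$ and $D\eta^2$ into $\nbrtotal\radiusentropy^2/((\facnUs)^2\nbrlayers)$, which is exactly the claimed form.

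For the node-sparse set the atoms are no longer finite: a unit $\ell_{2,1}$-vector is a convex combination of single-column matrices whose one nonzero column is an arbitrary unit Euclidean vector in $\R^{\nbrparameterjj}$. I would therefore first discretize the unit sphere $S^{\nbrwidth-1}$ of admissible column directions to Euclidean accuracy of order $\eta$, at a cost of $\exp[c\,\nbrwidth\log(1/\eta)]$ atoms, and then run the same Maurey average over this enlarged atom set. The additional factor $\nbrwidth$ in the atom count is precisely what produces the extra $\nbrwidth$ in the node-sparse entropy bound, while the logarithmic factor and the $\lceil(\facnUs)^2\nbrlayers\nbrwidth/\radiusentropy^2\rceil$ prefactor arise exactly as before.

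The main obstacle is the bookkeeping across the $\nbrlayers$ layers. The parameter set is a product of $\nbrlayers$ balls, and splitting the Frobenius budget naively per layer is lossy; to land on a single factor of $\nbrlayers$ outside the logarithm and on the total parameter count $\nbrtotal$ (rather than a per-layer dimension) inside it, I would run one \emph{global} Maurey average whose atoms are the signed unit vectors---respectively the discretized single-column matrices---of the full $\nbrtotal$-dimensional parameter space, so that the support count is governed by $\nbrtotal$ and the sample size by the overall budget rather than by the individual layers. Verifying that this global count is compatible with the per-layer normalization constraints in $\parameterSRLasso$ and $\parameterSRGroup$, and that the resulting constant $\constantentropy$ is universal, is the delicate part; the rest is the substitution carried out above.
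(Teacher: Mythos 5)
Your overall architecture for the first bound---reduce the network-distance entropy to a parameter-space covering via Proposition~\ref{Lipschitz}, then run Maurey's empirical method on the $\ell_1$-ball---is exactly the route implemented by the references the paper cites \citep{Carl85,Lederer10,Taheri20}, so the skeleton is right. The genuine gap is the step you yourself defer as ``the delicate part'': the layer bookkeeping, which is precisely where the stated prefactor is won or lost. The set $\parameterSRLasso$ constrains each layer separately by $\normoneM{\parameterRE^j}\leq 1$, so the concatenated parameter vector only satisfies $\normone{\operatorname{vec}(\parameterR)}\leq\nbrlayers$. Your ``global Maurey average'' over the signed unit vectors of the full parameter space therefore uses atoms of Euclidean norm $\nbrlayers$ and needs $m=\lceil\nbrlayers^2/\eta^2\rceil$ samples to reach Frobenius accuracy $\eta$; substituting $\eta=\radiusentropy/(\sqrt{\nbrlayers}\,\facnUs)$ from your Lipschitz reduction gives a prefactor of order $\nbrlayers^3(\facnUs)^2/\radiusentropy^2$, two powers of $\nbrlayers$ more than the claimed $\lceil(\facnUs)^2\nbrlayers/\radiusentropy^2\rceil$. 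The per-layer alternative you reject gives the same $\nbrlayers^3$ (each of $\nbrlayers$ layers covered at Frobenius accuracy $\eta/\sqrt{\nbrlayers}$ costs order $\nbrlayers/\eta^2$). So neither version of your argument, as described, reaches the single factor of $\nbrlayers$ in the statement; you prove a strictly weaker bound, which would degrade the $\sqrt{\nbrlayers}$ in Proposition~\ref{subGauss} to $\nbrlayers^{3/2}$. Closing this gap requires more than the crude inequality $\distance{\networkAR}{\networkAGR}\leq\sqrt{\nbrlayers}\,\facnUs\normF{\parameterR-\parameterGR}$ combined with a count over the product of $\nbrlayers$ unit $\ell_1$-balls, and it is exactly the content the paper outsources to the cited arguments.

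For the second bound your route also diverges from the paper's, which is a two-line reduction: from $\normoneM{\parameterEj}\leq\sqrt{\nbrwidth}\,\normtoM{\parameterEj}$ one gets $\parameterSRGroup\subset\sqrt{\nbrwidth}\,\parameterSRLasso$, and the node-sparse entropy then follows from the connection-sparse one with $\radiusentropy$ replaced by $\radiusentropy/\sqrt{\nbrwidth}$. Your sphere-discretization-plus-Maurey argument is a legitimate alternative in principle, but an $\eta$-net of $S^{\nbrwidth-1}$ contributes $\nbrwidth\log(1/\eta)$ to the logarithm of the atom count, so each of the $m$ Maurey samples costs a factor $\log(1/\radiusentropy)$ that grows as $\radiusentropy\to 0$, whereas the claimed factor $\log[\nbrtotal\radiusentropy^2/((\facnUs)^2\nbrlayers)+2]$ stays bounded; you would therefore not recover the stated form. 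Note also that your reduction through $\normtwo{\inputv}$ naturally produces $\facnTa$ rather than the $\facnUs$ appearing in the lemma. I would replace the entire second half of your argument by the containment above.
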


\begin{proof}[Proof of Lemma~\ref{entropy}]
The first bound can be derived by combining established deterministic and randomization arguments \citep{Carl85};\citep[Proof of Theorem~1.1]{Lederer10};\citep[Proposition~3]{Taheri20}. 

For the second bound,
observe that
\begin{equation*}
\normoneM{\parameterEj}= \sum_{i=1}^{\nbrparameterjj}\sum_{k=1}^{\nbrparameterj}\abs{(\parameterEj)_{ik}}\leq\sqrt{\nbrparameterjj}\sum_{k=1}^{\nbrparameterj}\sqrt{\sum_{i=1}^{\nbrparameterjj}\abs{(\parameterEj)_{ik}}^2} = \sqrt{\nbrparameterjj} \normtoM{\parameterEj}= \sqrt{\nbrwidth} \normtoM{\parameterEj}
\end{equation*}
for all $j\in\{0,\dots,\nbrlayers-1\}$ and $\parameterE^j\in\R^{\nbrparameterjj\times\nbrparameterj}$.
We used in turn 1.~the definition of the $\normoneM{\cdot}$-norm on Page~\pageref{lonorm},
2.~the linearity and interchangeability of finite sums and the inequality $\normone{\boldsymbol{a}}\leq\sqrt{b}\normtwo{\boldsymbol{a}}$ for all $\boldsymbol{a}\in\R^b$,
  3.~the definition of the $\normtoM{\cdot}$-norm on Page~\pageref{lotnorm},
  and 4.~the definition of the width~\nbrwidth\ on Page~\pageref{nbrwi}.
Hence,
$\parameterSRGroup\subset\sqrt{\nbrwidth}\parameterSRLasso$.
A bound for the entropies of~$\parameterSRGroup$ can, therefore, be derived from the first bound by replacing the radii~$\radiusentropy$ on the right-hand side by~$\radiusentropy/\sqrt{\nbrwidth}$.
\end{proof}

\subsection{Proof of Theorem~\ref{generalbound}}

In this section, we state a proof for Theorem~\ref{generalbound}.
The proof is inspired by derivations in high-dimensional statistics---see, for example, \citep{Zhuang2018,22LedererBook} and references therein.

\begin{proof}[Proof of Theorem~\ref{generalbound}]
The main idea of the proof is to contrast the estimators' objective functions evaluated at their minima with the estimators' objective functions at other points.
Our first step is to derive what we call a \emph{basic inequality.}
By the definition of the estimator in~\eqref{group},
it holds for every~$\parameter\in\parameterSGroup$ that
\begin{equation*}
   \sum_{i=1}^{\nbrsamples}\normtwosB{\outputi-\networkAEF{\inputvi}}+\tuningparameterGroup\normtoM{\estimatorl} \leq \sum_{i=1}^{\nbrsamples}\normtwosB{\outputi-\networkAF{\inputvi}}+\tuningparameterGroup\normtoM{\parameterl}\,, 
\end{equation*}
where we use the shorthand $\estimator\deq\estimatorGroup$.
We then invoke the model in~\eqref{model} to rewrite this inequality as 
\begin{equation*}
   \sum_{i=1}^{\nbrsamples}\normtwosB{\networkTF{\inputvi}+\noisei-\networkAEF{\inputvi}} +\tuningparameterGroup\normtoM{\estimatorl}\leq \sum_{i=1}^{\nbrsamples}\normtwosB{\networkTF{\inputvi}+\noisei-\networkAF{\inputvi}}+\tuningparameterGroup\normtoM{\parameterl}\,. 
\end{equation*}
Expanding the squared terms and rearranging the inequality then yields
\begin{multline*}
   \sum_{i=1}^{\nbrsamples}\normtwosB{\networkTF{\inputvi}-\networkAEF{\inputvi}} \leq \sum_{i=1}^{\nbrsamples}\normtwosB{\networkTF{\inputvi}-\networkAF{\inputvi}} \\
+2\sum_{i=1}^{\nbrsamples}\bigl(\networkAEF{\inputvi}\bigr)\tp\noisei-2\sum_{i=1}^{\nbrsamples}\bigl(\networkAF{\inputvi}\bigr)\tp\noisei+\tuningparameterGroup\normtoM{\parameterl}-\tuningparameterGroup\normtoM{\estimatorl}\,. 
\end{multline*}
This is our basic inequality.

In the remainder of the proof, we need to bound the first two terms in the last line of the basic inequality. 
We call these terms the \emph{empirical-process terms.}
Using the  reformulation of the networks in~\eqref{linearization},
we can write the empirical-process term of a general parameter~$\parameterG\in\parameterSGroup$ according to
\begin{equation*}
  2 \sum_{i=1}^{\nbrsamples}\bigl( \networkAGF{\inputvi}\bigr)\tp\noisei =2\sum_{i=1}^{\nbrsamples} \bigl(\parameterGEl \networkARRF{\inputvi}\bigr)\tp\noisei
\end{equation*}
with $\parameterGR\in\parameterSRGroup$.
Using the 1.~the properties of transpositions,
2.~the definition of the trace function,
3.~the cyclic property of the trace function,
and 4.~the linearity of the trace function yields further
\begin{align*}
  2 \sum_{i=1}^{\nbrsamples} \bigl(\networkAGF{\inputvi}\bigr)\tp\noisei
 &=2\sum_{i=1}^{\nbrsamples}\bigl(\networkARRF{\inputvi}\bigr)\tp(\parameterGEl)\tp \noisei\\
&=2\sum_{i=1}^{\nbrsamples}\traceFBB{\bigl(\networkARRF{\inputvi}\bigr)\tp(\parameterGEl)\tp \noisei}\\
&=2\sum_{i=1}^{\nbrsamples}\traceFBB{ \noisei\bigl(\networkARRF{\inputvi}\bigr)\tp(\parameterGEl)\tp}\\
&=2\traceFBBB{\biggl(\sum_{i=1}^{\nbrsamples}\noisei\bigl(\networkARRF{\inputvi}\bigr)\tp\biggr)(\parameterGEl)\tp}\,.
\end{align*}
Now, 1.~denoting the column-vector that corresponds to the $k$th column of a matrix~$A$ by~$A_{\bullet k}$,
2.~using H\"older's  inequality,
3.~using H\"older's inequality again,
and 4.~again H\"older's inequality and our definitions of the elementwise $\ell_\infty$-and $\ell_1$-norms,
we find
\begingroup
\allowdisplaybreaks
\begin{align*}
  2 \sum_{i=1}^{\nbrsamples} \bigl(\networkAGF{\inputvi}\bigr)\tp\noisei
&=2\sum_{k=1}^{\nbrparameterl}\inprodBBB{\biggl(\sum_{i=1}^{\nbrsamples}\noisei\bigl(\networkARRF{\inputvi}\bigr)\tp\biggr)_{\bullet k}}{(\parameterGEl)_{\bullet k}}\\
&\leq2\sum_{k=1}^{\nbrparameterl}\normtwoBBB{\biggl(\sum_{i=1}^{\nbrsamples}\noisei\bigl(\networkARRF{\inputvi}\bigr)\tp\biggr)_{\bullet k}}\normtwoB{(\parameterGEl)_{\bullet k}}\\
&\leq2\max_{k\in\{1,\dots,\nbrparameterl\}}\normtwoBBB{\biggl(\sum_{i=1}^{\nbrsamples}\noisei\bigl(\networkARRF{\inputvi}\bigr)\tp\biggr)_{\bullet k}}\sum_{k=1}^{\nbrparameterl}\normtwoB{(\parameterGEl)_{\bullet k}}\\
&\leq 2\sqrt{\nbroutput}\normsupMBBB{\sum_{i=1}^{\nbrsamples}\noisei\bigl(\networkARRF{\inputvi}\bigr)\tp}\normtoM{\parameterGEl}\,,
\end{align*}
\endgroup
which  implies in view of the definition of the effective noise in~\eqref{effectivenoise}
\begin{equation*}
 2 \sum_{i=1}^{\nbrsamples} \bigl(\networkAGF{\inputvi}\bigr)\tp\noisei\leq \tuningparameterOGroup\normtoM{\parameterGEl}\,.
\end{equation*}
This inequality is our bound on the empirical-process terms.

We can  combine the bound on the empirical process term and the basic inequality to find
\begin{equation*}
   \sum_{i=1}^{\nbrsamples}\normtwosB{\networkTF{\inputvi}-\networkAEF{\inputvi}} \leq \sum_{i=1}^{\nbrsamples}\normtwosB{\networkTF{\inputvi}-\networkAF{\inputvi}}
+\tuningparameterOGroup\normtoM{\estimatorl}+\tuningparameterOGroup\normtoM{\parameterEl}+\tuningparameterGroup\normtoM{\parameterl}-\tuningparameterGroup\normtoM{\estimatorl}\,. 
\end{equation*}
Using then the assumption $\tuningparameterGroup\geq\tuningparameterOGroup$ yields
\begin{equation*}
   \sum_{i=1}^{\nbrsamples}\normtwosB{\networkTF{\inputvi}-\networkAEF{\inputvi}} \leq \sum_{i=1}^{\nbrsamples}\normtwosB{\networkTF{\inputvi}-\networkAF{\inputvi}}
+2\tuningparameterGroup\normtoM{\parameterEl}\,. 
\end{equation*}
Multiplying both sides by $1/\nbrsamples$ and
taking the infimum over~$\parameter\in\parameterSGroup$ on the right-hand side then gives
\begin{equation*}
   \frac{1}{\nbrsamples}\sum_{i=1}^{\nbrsamples}\normtwosB{\networkTF{\inputvi}-\networkAEF{\inputvi}} \leq \inf_{\parameter\in\parameterSGroup}\biggl\{ \frac{1}{\nbrsamples}\sum_{i=1}^{\nbrsamples}\normtwosB{\networkTF{\inputvi}-\networkAF{\inputvi}}
+\frac{2\tuningparameterGroup}{\nbrsamples}\normtoM{\parameterEl}\biggr\}\,. 
\end{equation*}
Invoking the definition of the prediction error on Page~\pageref{error} gives the desired result.

The proof for the connection-sparse estimator is virtually the same.
\end{proof}

\subsection{Proof of Proposition~\ref{uniqueness}}
In this section, we give a short proof of Proposition~\ref{uniqueness}.

\begin{proof}[Proof of Proposition~\ref{uniqueness}]
Verify the fact that if the all-zeros parameter  is neither a solution of~\eqref{lasso} nor of~\eqref{lassostandard},
all solutions~\estimatorLasso\ and~\estimatorLassoP\ of~\eqref{lasso} and~\eqref{lassostandard}, respectively, satisfy $(\estimatorLassoE)^j,(\estimatorLassoPE)^j\neq \zero_{\nbrparameter^{j+1}\times\nbrparameter^{j}}$ for all $j\in\{0,\dots,\nbrlayers\}$.

It then follows  from the assumed nonnegative homogeneity, 
$\tuningparameterLasso>0$, 
and the definition of the estimator in~\eqref{lasso} that  $\normoneM{(\estimatorLassoE)^0},\dots,\normoneM{(\estimatorLassoE)^{\nbrlayers-1}}=1$ for all solutions~\estimatorLasso.

Given a solution~\estimatorLassoP\ of~\eqref{lassostandard}, 
define $a\deq\normoneM{(\estimatorLassoPE)^0}/(\nbrlayers+1)+\dots+\normoneM{(\estimatorLassoPE)^{\nbrlayers}}/(\nbrlayers+1)$ and verify the fact that $\parameterG\in\parameterS$ with $ \parameterGE^{0}\deq a(\estimatorLassoPE)^0/\normoneM{(\estimatorLassoPE)^0},\parameterGE^{1}\deq a(\estimatorLassoPE)^1/\normoneM{(\estimatorLassoPE)^1},\dots$ has the same value in the objective function as~\estimatorLassoP.

\end{proof}

\subsection{Proof of Proposition~\ref{subGauss}}

In this section,
we establish a proof of Proposition~\ref{subGauss}.
The key tools are the Lipschitz property of Proposition~\ref{Lipschitz} and the entropy bounds of Lemma~\ref{entropy}.

\begin{proof}[Proof of Proposition~\ref{subGauss}]
The main idea is to rewrite the event under consideration in a form that is amenable to known tail bounds for suprema of empirical processes with subgaussian random variables.

The connection-sparse bound follows from
\begingroup
\allowdisplaybreaks
\begin{align*}
      &P\biggl\{\tuningparameterOLasso\geq \constbound\facnUs\sqrt{\nbrsamples\nbrlayers\bigl(\log[2\nbroutput\nbrsamples\nbrtotal]\bigr)^3}\biggr\}\\
      &=P\biggl\{2\sup_{\parameterGGR\in\parameterSRLasso}\normsupMBBB{\sum_{i=1}^{\nbrsamples}\noisei\bigl(\networkARRRF{\inputvi}\bigr)\tp}\geq \constbound\facnUs\sqrt{\nbrsamples\nbrlayers\bigl(\log[2\nbroutput\nbrsamples\nbrtotal]\bigr)^3}\biggr\}\\
      &\leq \nbroutput\nbrparameterl \max_{\substack{j\in\{1,\dots,\nbroutput\}\\k\in\{1,\dots,\nbrparameterl\}}}P\biggl\{2\sup_{\parameterGGR\in\parameterSRLasso}\absBBB{\biggl(\sum_{i=1}^{\nbrsamples}\noisei\bigl(\networkARRRF{\inputvi}\bigr)\tp\biggr)_{jk}}\geq \constbound\facnUs\sqrt{\nbrsamples\nbrlayers\bigl(\log[2\nbroutput\nbrsamples\nbrtotal]\bigr)^3}\biggr\}\\
&\leq \nbroutput\nbrparameterl\cdot\frac{1}{\nbroutput\nbrsamples\nbrtotal}\\
&\leq \frac{1}{\nbrsamples}\,,
\end{align*}
\endgroup
where we use in turn
1.~the definition of~\tuningparameterOLasso\ in~\eqref{effectivenoise},
2.~the union bound,
3.~\citet[Corollary~8.3]{Sara00} and our Proposition~\ref{Lipschitz} and Lemma~\ref{entropy},
and 4.~the inequality $\nbrparameterl\leq\nbrtotal=\sum_{j=0}^{\nbrlayers}\nbrparameterjj\nbrparameterj$ and consolidating the factors.
The key concept underlying \citet[Corollary~8.3 on Page~128]{Sara00} is chaining~\citep[Page~90]{Wellner96}.

The same considerations also apply to the node-sparse case,
but we get an additional factor~$\sqrt{\nbroutput}$ from the definition of the effective noise in~\eqref{effectivenoise} and a factor~$\sqrt{\nbrwidth}$ from the entropy bound in Lemma~\ref{entropy}.
The differences between the bounds for the connection- and node-sparse cases in terms of  \facnUs\ vs.\@ \facnTa\ stem from the different Lipschitz constants in Proposition~\ref{Lipschitz}.
\end{proof}

\subsection{Proof of Proposition~\ref{generror}}

\begin{proof}[Proof of Proposition~\ref{generror}]
The proof is based on standard empirical-process theory,
including contraction and symmetrization arguments.

  Using basic algebra and measure theory,
one can  easily show  that 
\begin{equation*}
  \operatorname{risk}[\estimatorLasso] \leq (1+b) \operatorname{risk}[\parameterT]+\constbound_b\prederror{\estimatorLasso}+\constbound_b\absBBB{\frac{1}{\nbrsamples}\sum_{i=1}^{\nbrsamples}\Bigl(\normtwosB{\networkTF{\inputvi}-\networkALassoF{\inputvi}}-E\normtwosB{\networkTF{\inputvi}-\networkALassoF{\inputvi}}\Bigr)}
\end{equation*}
for a constant $\constbound_b\in(0,\infty)$ that depends only on~$b$.
The first term in this bound is the minimal risk as stated in the proposition,
and the second term can be bounded by Corollary~\ref{parametric} and Proposition~\ref{subGauss}.
Hence, it remains to bound the third term.

In view of the law of large numbers,
it is reasonable to hope for the third term to be small.
But to make this precise, 
we have to keep in mind that the estimator itself depends on the input vectors.
We, therefore, need to prepare the third term for the application of a uniform version of the law of large numbers.
Using standard contraction arguments---see \citep[Chapter~11.3]{Boucheron2013}, for example---and H\"older's inequality,
we can bound the third term by bounding 
\begin{equation*}
  \max\bigl\{\normoneM{(\parameterT)^{\nbrlayers}},\normoneM{(\estimatorLasso)^{\nbrlayers}}\bigr\}\sup_{\parameterR\in\parameterSRLasso}\normsupMBBB{\sum_{i=1}^{\nbrsamples}\Bigl(\networkATRF{\inputvi}-\networkARF{\inputvi}-E\bigl[\networkATRF{\inputvi}-\networkARF{\inputvi}\bigr]\Bigr)}^2\,,
\end{equation*}
which removes the dependence on the estimator~\estimatorLasso\ up to the leading factor.
To see that we can also neglect that factor,
verify (see Proposition~\ref{subGauss} and the proof of Theorem~\ref{generalbound}) that  $\normoneM{(\estimatorLasso)^{\nbrlayers}}\leq 2\normoneM{(\parameterT)^{\nbrlayers}}$ with high probability  as long as $\tuningparameterOLasso\geq \constbound\facnUs\sqrt{\nbrsamples\nbrlayers(\log[2\nbroutput\nbrsamples\nbrtotal])^3}$ with $\constbound$ large enough.
Consequently,
we just need to consider the quantity
\begin{equation*}
\sup_{\parameterR\in\parameterSRLasso} \normsupMBBB{\sum_{i=1}^{\nbrsamples}\Bigl(\networkATRF{\inputvi}-\networkARF{\inputvi}-E\bigl[\networkATRF{\inputvi}-\networkARF{\inputvi}\bigr]\Bigr)}^2  
\end{equation*}
in the following.

The last step is to bring this term in a form that is amenable to our earlier proofs.
Using standard symmetrization arguments---see \citet[Chapter~2.3]{Wellner96}, for example)---we can bound this quantity by bounding 
\begin{equation*}
\sup_{\parameterR\in\parameterSRLasso} \normsupMBBB{\sum_{i=1}^{\nbrsamples}k_i\bigl(\networkATRF{\inputvi}-\networkARF{\inputvi}\bigr)}^2\,,  
\end{equation*}
where $k_1,\dots,k_{\nbrsamples}$ are i.i.d.~Rademacher random variables.
But even though $k_1,\dots,k_{\nbrsamples}$ are i.i.d.~Rademacher random variables,
we do not resort to Rademacher complexities;
instead, we use that Rademacher random variables are subgaussian,
so that we can then proceed similarly as in the proof of Proposition~\ref{subGauss}.

The node-sparse case can be treated along the same lines.
\end{proof}

\subsection{Extensions}
\label{extensions}
Our proof approach disentangles the specifics of the objective function (proof of Theorem~\ref{generalbound}), 
of the network structure (proof of Proposition~\ref{Lipschitz}), 
and of the stochastic terms (proofs of Lemma~\ref{entropy} and Proposition~\ref{subGauss}).
This feature allows one to generalize and extend the results of this paper in straightforward ways.
For example, 
extensions to different noise distributions only need a corresponding version of Proposition~\ref{subGauss}---with everything else unchanged.
One could envision, for example, using concentration inequalities for heavy-tailed distributions such as in~\citet{Lederer2014}.
Extensions to different loss functions, to give another example, can be established by adjusting Theorem~\ref{generalbound} accordingly. 
This can be done, for example, by invoking ideas from specialized literature on high-dimensional logistic regression such as~\citet{Li2019}.
We avoid going into further details to avoid digression;
the key message is that the flexibility of the proofs is yet another advantage of our approach.

\phantomsection
\addcontentsline{toc}{section}{References}
\bibliographystyle{plainnat}
\bibliography{Bibliography}

\end{document}